\def\eqref#1{equation~\ref{#1}}
\def\1{\bm{1}}
\DeclareMathAlphabet{\mathsfit}{\encodingdefault}{\sfdefault}{m}{sl}
\SetMathAlphabet{\mathsfit}{bold}{\encodingdefault}{\sfdefault}{bx}{n}
\newcommand{\R}{\mathbb{R}}
\newcommand{\softmax}{\mathrm{softmax}}
\DeclareMathOperator{\sign}{sign}
\newcommand\N{{\mathbb N}}
\newcommand\nan{{\texttt{NaN}}}
\newcommand\footnoteref[1]{\protected@xdef\@thefnmark{\ref{#1}}\@footnotemark}
\DeclareMathAlphabet\mathbfcal{OMS}{cmsy}{b}{n}
\newtheorem{thrm}{Theorem}
\newtheorem{lma}{Lemma}
\newtheorem{exple}{Example}
\newcommand{\squishlist}{
 \begin{list}{$\bullet$}
  {  \setlength{\itemsep}{0pt}
     \setlength{\parsep}{3pt}
     \setlength{\topsep}{3pt}
     \setlength{\partopsep}{0pt}
     \setlength{\leftmargin}{2em}
     \setlength{\labelwidth}{1.5em}
     \setlength{\labelsep}{0.5em}
} }
\newcommand{\squishlisttight}{
 \begin{list}{$\bullet$}
  { \setlength{\itemsep}{0pt}
    \setlength{\parsep}{0pt}
    \setlength{\topsep}{0pt}
    \setlength{\partopsep}{0pt}
    \setlength{\leftmargin}{2em}
    \setlength{\labelwidth}{1.5em}
    \setlength{\labelsep}{0.5em}
} }
\newcommand{\squishdesc}{
 \begin{list}{}
  {  \setlength{\itemsep}{0pt}
     \setlength{\parsep}{3pt}
     \setlength{\topsep}{3pt}
     \setlength{\partopsep}{0pt}
     \setlength{\leftmargin}{1em}
     \setlength{\labelwidth}{1.5em}
     \setlength{\labelsep}{0.5em}
} }
\newcommand{\squishend}{
  \end{list}
}
\newcommand{\eat}[1]{}
\newcounter{ccc}
\pgfplotsset{compat = newest}
\newcommand{\bigO}{\mathcal{O}}
\newcommand{\act}{\textnormal{Shiesh}}
\newcommand{\fc}{\text{SITA}}
\newcommand{\ft}{\textnormal{EL}}
\newcommand{\fa}{\text{Shiesh}}
\newcommand{\bftab}[1]{{\fontseries{b}\selectfont #1}}
\newcommand{\xobs}{x^\textsc{obs}}
\newcommand{\xq}{x^\textsc{qry}}
\newcommand{\qu}{^\textsc{qry}}
\newcommand{\qem}{\text{GraFITi}}
\newcommand{\scale}{\text{NN}\textsuperscript{sca}}
\newcommand{\transl}{\text{NN}\textsuperscript{trs}}
\newcommand{\obs}{^\textsc{obs}}
\newcommand{\D}{\mathcal{D}}
\newcommand\normal{{\cal N}}
\newcommand\commentout[1]{}
\DeclareMathOperator\argsort{\text{argsort}}
\newcommand\id{\mathbb{I}}
\newcommand{\com}{^\text{com}}
\newcommand{\satt}{\text{Attn}}
\newcommand{\isatt}{\text{iAttn}}
\newcommand{\stisatt}{\text{SITA}}
\title{Probabilistic Forecasting of Irregularly Sampled Time Series with Missing Values via Conditional Normalizing Flows}
\author {
	Vijaya Krishna Yalavarthi\equalcontrib \textsuperscript{\rm 1}, 
	Randolf Scholz\equalcontrib \textsuperscript{\rm 1}, 
	Stefan Born \textsuperscript{\rm 2}, 
	Lars Schmidt-Thieme \textsuperscript{\rm 1}
}
\begin{document}

\maketitle

\begin{abstract}
	Probabilistic forecasting of irregularly sampled multivariate time series with missing values
		is crucial for decision making in various domains, including health care, astronomy, and climate.
		State-of-the-art methods estimate only
		marginal distributions of observations
		in single channels and at single timepoints,
		assuming a Gaussian distribution for the data.
		In this work, we propose a novel model, ProFITi 	
		using conditional normalizing flows to learn 
		multivariate conditional distribution:
		joint distribution of the future values of the time series
		conditioned on past observations and specific channels and timepoints,
		without assuming 
		any fixed shape of the underlying distribution.
		As model components, we introduce a novel invertible triangular attention layer
		and an invertible non-linear activation function on and onto the whole real line.
		Through extensive experiments on 4 real-world datasets,
		ProFITi demonstrates significant improvement,
		achieving an average log-likelihood gain of 2.0
		compared to the previous state-of-the-art method.
\end{abstract}

\section{Introduction}
\label{sec:intro}

Irregularly sampled multivariate time series with missing values (IMTS) 
are common in various real-world scenarios such as health, astronomy and climate.
Accurate forecasting of IMTS is important for decision-making,
but estimating uncertainty is crucial to avoid overconfidence.
State-of-the-art models applied to this task are Ordinary Differential Equations (ODE) based models~\cite{Schirmer2022.Modeling,DeBrouwer2019.GRUODEBayes,Bilos2021.Neural} 
which are 1) computationally inefficient, and 2) 
offer only marginal likelihoods.
In practice, joint or multivariate distributions
are desired to capture dependencies and study forecasting scenarios.
With joint distributions one can estimate the likelihood of
specific combinations of future variables ex.
likelihood of having rain and strong winds,
which marginal or point forecast models cannot deliver.

For this, we propose a novel conditional normalizing 
flow model called {\textbf{ProFITi}}, 
for \textbf{Pro}babilistic \textbf{F}orecasting of \textbf{I}rregularly sampled Multivariate \textbf{Ti}me series.
ProFITi is designed to learn conditional joint distributions.
We also propose two novel model components that can be used in 
flow models: a sorted invertible triangular attention layer, \textbf{\fc{}}, parametrized by conditioning input
to learn joint distributions,
and an invertible non-linear activation
function designed for flows,
\textbf{\fa{}}, that is on and onto whole real line.
ProFITi consists of several invertible blocks build
using \fc{} and $\fa{}$
functions.
Being a flow-based model, ProFITi can learn any
random conditional joint distribution,
while existing models \cite{Schirmer2022.Modeling,DeBrouwer2019.GRUODEBayes,Bilos2021.Neural}
learn only Gaussians.

Experiments on $4$ real-world IMTS datasets,
attest the superior performance of ProFITi.
Our contributions are:
\begin{enumerate}
	
	\item Introduced \textbf{ProFITi}, a novel and, to the best of our knowledge, first
	normalizing flow based probabilistic forecasting
	model for predicting 
	multivariate conditional distributions
	of irregularly sampled time series with missing values (Section~\ref{sec:profiti}).
	
	\item Proposed a novel invertible triangular attention layer, named \textbf{\stisatt},
	which enables target variables to interact and 
	capture dependencies within a conditional normalizing flow framework (Section~\ref{sec:inv_cond_flow}).

	\item Proposed a novel non-linear, invertible,
	differentiable activation function on and onto
	the whole real line, \textbf{\fa} (Section~\ref{sec:act}).
	This activation function can be used in normalizing flows.

	\item Conducted experiments on $4$
	IMTS datasets for normalized joint negative loglikelihood.
	On average, ProFITi provides a loglikelihood gain of $2.0$
	over the previously best model (Section~\ref{sec:exp}).
\end{enumerate}
Implementation: \url{github.com/yalavarthivk/ProFITi}

\section{Literature Review}
\label{sec:literature}

There have been multiple works dealing point forecasting
of irregular time series~\cite{Yalavarthi2024.GraFITi, Ansari2023.Neural,Chen2024.ContiFormer}.
Very few models provide uncertainty quantification 
for such forecasts.

\paragraph{Probabilistic Forecasting Models for IMTS.}
Probabilistic IMTS forecasting often relies on variational inference
or predicting distribution parameters.
Neural ODE models~\citep{Chen2018.Neural}
combine probabilistic latent states
with deterministic networks.
Other approaches like
  latent-ODE~\citep{Rubanova2019.Latent},
  GRU-ODE-Bayes~\citep{DeBrouwer2019.GRUODEBayes},
  Neural-Flows~\citep{Bilos2021.Neural}, and
  Continuous Recurrent Units (\citealp{Schirmer2022.Modeling})
provide only marginal distributions, no joint distributions.
Similarly, probabilistic interpolation models, such as HETVAE~\citep{Shukla2022.Heteroscedastic} and TripletFormer~\citep{Yalavarthi2023.Tripletformer}, also provide only Gaussian marginal distributions.
In contrast, Gaussian Process Regression
models (GPR; \citealp{Durichen2015.Multitask,Li2015.Classification,Li2016.Scalable,Bonilla2007.Multitask}) offer full joint posterior
distributions for forecasts, but 
struggle with the
computational demands on long time series due to the dense matrix
inversion operations.
All the models assume the data distribution is to be Gaussian and 
fail if the true distribution is different.

\paragraph{Normalizing Flows for variable input size.}
We deal with predicting distributions for variable many targets. 
This utilizes equivariant transformations, as shown in \citep{Bilos2021.Normalizing,Satorras2021.Equivariant,Liu2019.Graph}.
All the models apply continuous normalizing flows which require solving an ODE 
 driven by a neural network using a slow numerical integration process.
Additionally, they cannot incorporate conditioning inputs.

\paragraph{Conditioning Normalizing Flows.}
Learning conditional densities has been largely explored within computer vision \citep{Khorashadizadeh2023.Conditional,Winkler2019.Learning,AnanthaPadmanabha2021.Solving}.
They apply normalizing flow blocks such as affine transformations~\citep{Dinh2017.Density}, autoregressive transformations~\citep{Kingma2013.Autoencoding} or Sylvester flow blocks~\citep{vandenBerg2018.Sylvester}.
Often the conditioning input is appended to the target while passing through the flow layers as
shown in~\citep{Winkler2019.Learning}.
For continuous data representations only a few works 
exist~\citep{Kumar2020.VideoFlow,deBezenac2020.Normalizing,Rasul2021.Multivariate,Si2022.Autoregressive}.  
However, methods that deal with regular multivariate time series (such as \citealp{Rasul2021.Multivariate})
cannot handle IMTS due to its missing values.
We solve this by using invertible attention that allows flexible size.

\paragraph{Flows with Invertible Attention.}
To the best of our knowledge, there have been only two works that
develop invertible attention for Normalizing Flows.
Sukthanker et. al. \cite{Sukthanker2022.Generative} proposed an invertible attention by adding the
identity matrix to a softmax attention. However,
softmax yields only positive values in the attention matrix and does not learn
negative covariances.
Zha et. al. \cite{Zha2021.Invertible} introduced residual attention similar to
residual flows~\cite{Behrmann2019.Invertible} that
suffer from similar problems as residual flows
such as the lack of an explicit inverse making inference slow.
Additionally, computing determinants of dense attention
matrices has cubic complexity which is not desired.

\section{Problem Setting \&\ Analysis}
\label{sec:prilims}

\paragraph{The IMTS Forecasting Problem.}
\label{sec:imts_forec}

An \textbf{irregularly sampled multivariate times series with missing values}
(called briefly just \textbf{IMTS} in the following),
is a sequence
$x\obs = ((t_\tau, v_\tau))_{\tau=1:\mathcal{T}}$ where $v_\tau\in \{\R\cup \nan\}^C$
is an observation event at timepoint $t_\tau \in \R$.
$v_{\tau,c} \in \R$ indicates observed value and $\nan$
indicates a missing value. Horn et. al. \cite{Horn2020.Set} introduced set notation
where only observed values are considered and
missing values are ignored. For notational
continence, we use sequences whose order does not matter
to represent sets.

Now, 
$x\obs = \left((t_i\obs, c_i\obs, o_i\obs)\right)_{i=1:I}$
is a sequence of unique triples, 
where
$t_i\obs \in \R$ denotes the time,
$c_i\obs \in \{1,...,C\}$ the channel and
$o_i\obs \in \R$ the value of an observation,
$I\in\N$ the total number of observations across all channels
and $C\in\N$ the number of channels.

An \textbf{IMTS query} is a sequence
$x\qu = \left((t_k\qu, c_k\qu)\right)_{k=1:K}$
of just timepoints and channels (also unique), a sequence $y\in \R^K$ we call an \textbf{answer}. It is understood that
$y_k$ is the answer to the query $(t_k\qu, c_k\qu)$.

The \textbf{IMTS probabilistic forecasting problem} then
is, given a dataset
$\D^\text{train}:= \left(({x\obs}^n, {x\qu}^n, {y}^n)\right)_{n=1:N}$
of triples of time series, queries and answers from an unknown distribution $p$
(with earliest query timepoint is beyond the latest observed timepoint for series $n$, $\min_k {t\qu_k} > \max_i {t\obs_i}$),
to find a model $\hat p$ that maps each observation/query pair $(x\obs,x\qu)$
to a joint density over answers,
$\hat p(y_1, \ldots, y_k \mid x\obs, x\qu)$, such that the expected joint negative
log likelihood is minimal:
\begin{align*}
	\ell^\text{jNLL}(\hat p; p) := - {\mathbb E}_{(x\obs,x\qu,y)\sim p} \log \hat p(y \mid x\obs, x\qu)
\end{align*}
Please note, that the number $C$ of channels is fixed, but
the number $I$ of past observations and
the number $K$ of future observations queried
may vary over instances $(x\obs,x\qu,y)$.
If query sizes $K$ vary, instead of (joint) negative log likelihood one also can normalize
by query size to make numbers comparable over different series and limit
the influence of large queries, the \textbf{normalized joint negative
	log likelihood} njNLL:
\begin{align}
	\ell^\text{njNLL}(\hat p; p)
	& :=- \mathop{\mathbb{E}}_{(x\obs, x\qu, y)\sim p} \frac{1}{|y|} \log \hat p(y \mid x\obs, x\qu)
	\label{eq:njnl}
\end{align}
\paragraph{Problem Analysis and Characteristics.}
As the problem is not just an (unconditioned) density estimation problem,
but the distribution of the outputs depends on both, the past observations
and the queries, a \textbf{conditional density model} is required 
(\textbf{requirement 1}).

A crucial difference from many settings addressed in the related work~\cite{Schirmer2022.Modeling,Bilos2021.Neural,DeBrouwer2019.GRUODEBayes}
is that we look for probabilistic models of the \textbf{joint distribution} of all
queried observation values $(y_1, \ldots, y_K)$, not just at
the \textbf{single variable marginal distributions}
$p(y_k \mid x\obs, x\qu_k)$ (for $k=1{:}K$).
The problem of marginal distributions is a special case of our formulation
where all queries happen to have just one element (always $K=1$).
So for joint probabilistic forecasting of IMTS,
models need to output densities on a \textbf{variable number of
	variables} (\textbf{requirement 2}).

Furthermore, since we deal with the set representation of IMTS,
whenever two query elements get swapped,
a generative model should swap its output accordingly,
a density model should yield the same density value,
i.e.,
the model should be \textbf{permutation invariant} (\textbf{requirement 3}).
For any permutation $\pi$:
\begin{align}
	\hat p(y_1,\ldots,&y_K \mid x\obs,x\qu_1,\ldots,x\qu_K) 	= \nonumber \\
	& \hat p(y_{\pi(1)},\ldots,y_{\pi(K)} \mid x\obs,x\qu_{\pi(1)},\ldots,x\qu_{\pi(K)})
	\label{eq:invariant}
\end{align}

\section{Invariant Conditional Normalizing Flow Models}
\label{sec:inv_cond_flow}
\paragraph{Normalizing flows.}
Parametrizing a specific distribution such as the Gaussian is a 
simple and robust approach to probabilistic forecasting.
It can be added on top of any point forecasting model
(for marginal distributions or fixed-size queries at least).
However, such models are less suited for targets having a more
complex distribution.
Then typically normalizing flows are
used \cite{Rippel2013.Highdimensional,Papamakarios2021.Normalizing}.
A normalizing flow is an (unconditional) density model for 
variables $y\in\R^K$ consisting of
a simple base distribution,
typically a standard normal $p_Z(z):= \normal(z; 0_K, \id_{K\times K})$, and
an invertible, differentiable, parametrized map
$f(z; \theta): \R^K \to \R^K$; then
\begin{align}
	\hat p(y; \theta) :=
	p_Z(f^{-1}(y; \theta)) \left| \text{det}\left(\frac{\partial f^{-1}(y; \theta)}{\partial y}\right) \right|
	\label{eq:nf}
\end{align}
is a proper density, i.e., integrates to 1, and can be fitted to data minimizing
negative log likelihood via gradient descent algorithms.
A normalizing flow can be conditioned on predictor variables $x\in\R^M$
by simply making $f$ dependent on predictors $x$, too: $f(z; x, \theta)$ (satisfying \textbf{requirement 1}).
$f$ then has to be invertible w.r.t. $z$ for any $x$ and $\theta$ \citep{Trippe2018.Conditional}.

\paragraph{Invariant conditional normalizing flows.}
A conditional normalizing flow represents an invariant
conditional distribution in the sense of eq.~\ref{eq:invariant} (\textbf{requirement 3}),
if
i) its predictors $x$ also can be grouped into $K$ elements $x_1,\ldots,x_K$
and possibly common elements $x\com$:
$x =(x_1,\ldots,x_K,x\com)$, and
ii) its transformation $f$ is equivariant in stacked $x_{1:K}$ and $z_{1:k}$:
\begin{align}
	f(z^{\pi}; x_{1:K}^{\pi}, x\com, \theta)^{\pi^{-1}} & = f(z; x_{1:K}, x\com, \theta) \nonumber
	\\
	 & \forall \text{permutations } \pi
  \label{eq:conditional-nf}
\end{align}
where $z^{\pi}:= (z_{\pi(1)},\ldots,z_{\pi(K)})$ denotes a permuted vector.
We call this an \textbf{invariant conditional normalizing flow model}.
%
If $K$ is fixed, we call it \textbf{fixed size}, otherwise \textbf{dynamic size}. 
In our work, we consider $x_{1:K}$ as the embedding of $\xq_1,\ldots,\xq_K$
and $\xobs$ and ignore $x^\text{com}$ (see eq.~\ref{eq:grafiti}).

\paragraph{Invariant conditional normalizing flows via invertible attention.}
The primary choice for a dynamic size (\textbf{requirement 2}), equivariant, parametrized function
is attention (\satt; \citealp{Vaswani2017.Attention}):
\begin{align*}
	A(X_\text{Q}, X_\text{K}) &:= X_\text{Q}W_\text{Q}(X_\text{K}W_\text{K})^T, \quad
	\\ A^{\text{softmax}}(X_\text{Q}, X_\text{K}) &:= \text{softmax}(A(X_\text{Q}, X_\text{K}))
	\\   \satt(X_\text{Q}, X_\text{K}, X_\text{V}) & := A^{\text{softmax}}(X_\text{Q}, X_\text{K})\cdot X_\text{V}W_\text{V}
\end{align*} 
where $X_\text{Q}, X_\text{K}, X_\text{V}$ are query, key and value matrices,
$W_\text{Q},W_\text{K},W_\text{V}$ are parameter matrices (not depending on the number of rows of $X_\text{Q}, X_\text{K}, X_\text{V}$)
and the softmax is taken rowwise.

Self attention mechanism ($X_\text{Q} = X_\text{K} = X_\text{V}$)
has been used in the literature as is for unconditional
vector fields \citep{Kohler2020.Equivariant,Li2020.Exchangeable,Bilos2021.Normalizing}.
To be used in a conditional vector field, $X_\text{Q} = X_\text{K} = X$ 
will have to contain the condition elements $x_{1:K}$,
$X_\text{V} = Z$ contains the base samples $z_{1:K}$ and $W_\text{V} = 1$.
\begin{align}
	X := \left[\begin{array}{cc} x_1^T
		\\ \vdots
		\\ x_K^T
	\end{array}\right], \quad Z :=  \left[\begin{array}{cc} z_1
	\\ \vdots
	\\ z_K
\end{array}\right] \label{eq:xmatrix}
\end{align}
Now, we make attention matrix itself invertible.
To get \textbf{invertible attention (\isatt)}, we
regularize the attention matrix $A$ sufficiently to become invertible
(see Lemma 1 in appendix for proof)
\begin{align}
	A^{\text{reg}}(X) & := \frac{1}{\|A(X,X)\|_2+\epsilon} A(X, X) + \id      \label{eq:eps}
	\\ \isatt(Z, X) & := A^{\text{reg}}(X).Z
	\label{eq:isatt}
\end{align}
where $\epsilon>0$ is a hyperparameter.
We note that, $A^{\text{reg}}(X)$ is not a parameter of the model,
but computed from the conditioners $x$.
Our approach is different from iTrans attention \cite[fig. 17]{Sukthanker2022.Generative}
that makes attention invertible more easily via
$A^{\text{iTrans}}(X):= A^{\softmax}(X,X) + \id$ using the fact that the
spectral radius $\sigma(A^{\softmax}(X,X)) \le 1$,
but therefore is restricted to non-negative interaction weights.

The attention matrix $A^{\text{reg}}(X)$ will be dense in general and thus slow to
invert, taking $\bigO(K^3)$ operations. 
Following ideas for autoregressive flows and coupling layers, 
a triangular matrix would allow a much more
efficient inverse pass, as its determinant
can be computed in $\bigO(K)$ and linear systems
can be solved in $\bigO(K^2)$.
%
This does not restrict the expressivity of the model, as
due to the Knothe–Rosenblatt rearrangement~\citep{Villani2009.Optimal}
from optimal transport theory,
any two probability distributions on $\R^K$ can be transformed
into each other by flows with a locally triangular Jacobian.
%
Unfortunately, just masking the upper triangular part of the matrix will destroy the
equivariance of the model. We resort to the simplest way to make
a function equivariant: we sort the inputs before passing them into the layer and revert the outputs to the original ordering. We call this approach 
\textbf{sorted invertible triangular attention (\stisatt)}:
\begin{align}
	 \pi & := \argsort (x_1S,\ldots,x_KS) \label{eq:pi}
	\\   A^{\text{tri}}(X) & := \text{softplus-diag}(\text{lower-triang}(A(X,X))) +\epsilon\id \label{eq:Atri}
	\\ \stisatt(Z, X) & :=(A^{\text{tri}}(X^\pi) \cdot Z^\pi)^{\pi^{-1}} \label{eq:SITA}
\end{align}
where $\pi$ operates on the rows of $X$ and $Z$.
Softplus activation is applied to diagonal elements making them positive.
Sorting is a simple lexicographic sort along the dimensions of vector $x_kS$.
The matrix $S$ allows to specify a sorting criterion, e.g., a permutation
matrix.
Note that sorting is unique only when $x$ has unique elements.
{\em In practice, we compute $\pi$ from $x\qu$ instead of $x$, 
first sort by timepoint, and then by channel}.

\begin{exple}[Demonstration of sorting in SITA]
	Given $x\qu = ((1,2), (0,2), (2,1), (3,1), (0,1), (3,3))$ 
	where first and second elements in $\xq_k$ indicate queried time and 
	channel respectively.
	Assume $S = 
	\begin{psmallmatrix}
		1 & 0 \\ 
		0 & 1
	\end{psmallmatrix}$. Then
	\begin{align*}
		\pi	& = \argsort(x\qu_1S, \ldots, x\qu_5S) \\
		& = \argsort((1,2), (0,2), (2,1), (3,1), (0,1), (3,3)) \\
		& = (5,2,1,3,4,6)
	\end{align*}
\end{exple}

\section{Shiesh: A New Activation Function for Normalizing Flows}
\label{sec:act}

The transformation function $f$ of a normalizing flow usually is realized
as a stack of several simple functions.
As in any other neural network,
elementwise applications of a function, called activation functions, is one
of those layers that allows for non-linear transformations.
\begin{table}[t]
	\centering
	\caption{Properties of existing activation functions. \commentout{E1: Bijective, E2: Domain alignment, E3: Non-zero gradient}}
	
	\label{tab:acts}
	\begin{tabular}{l|ccc}
		\toprule
		Activation	& E1	& E2	&	E3\\
		\hline
		ReLU	&	$\cross$	&	$\cross$	&	$\cross$	\\
		Leaky-ReLU	&	$\checkmark$	&	$\checkmark$	& $\checkmark$	\\
		P-ReLU	&	$\checkmark$	&	$\checkmark$	& $\checkmark$	\\		
		ELU		&	$\checkmark$	&	$\cross$ &	$\checkmark$\\
		SELU	&	$\checkmark$	&	$\cross$	&	$\checkmark$\\
		GELU	&	$\cross$	&	$\cross$ & $\checkmark$	\\
		Tanh	&	$\checkmark$	&	$\cross$	&	$\checkmark$	\\
		Sigmoid	&	$\checkmark$	&	$\cross$	&	$\checkmark$	\\
		Tanh-shrink	&	$\checkmark$	&	$\checkmark$	&	$\cross$	 \\
		\hline
		\act	&	$\checkmark$	&	$\checkmark$	&	$\checkmark$	\\
		\bottomrule
	\end{tabular}
\end{table}
However, most common activation functions used in deep learning are not diffeomorphic and do not have both their domain and co-domain on the entire real line, making them unsuitable for normalizing flows. 
For instance:
\begin{itemize}
	\item \textbf{ReLU} is not invertible (E1)
	\item \textbf{ELU} cannot be used consistently throughout the layer stack because its output domain, \(\mathbb{R}^+\), does not span the entire real number line (E2)
	\item \textbf{Tanh-shrink} (\(\text{tanhshrink}(u) := u - \tanh(u)\)) is invertible and covers the entire real line, but it has a zero gradient at some points (e.g., at \(u = 0\)). This zero gradient makes it impossible to compute the inverse of the normalizing factor, \(\left| \text{det}\left(\frac{\partial f(u)}{\partial u}\right) \right|\), required for normalizing flows (E3)
\end{itemize}

To serve as a standalone layer in a normalizing flow, an activation function
must fulfill these three requirements:
\textbf{E1}. be invertible,
\textbf{E2}. cover the whole real line and
\textbf{E3}. have no zero gradients.
Out of all activation functions in the
pytorch library (V 2.2) only Leaky-ReLU and P-ReLU
meet all three requirements (see Table~\ref{tab:acts}). 
Leaky-ReLU and P-ReLU usually are used with a
slope on their negative branch being well less than 1, so
that stacking many of them might lead to small gradients
also causing problems for the normalizing constant of a normalizing flow.

To address these challenges, we propose a new activation function derived from unconstrained monotonic neural networks (UMNN; \citealp{Wehenkel2019.Unconstrained}).
UMNN have been proposed as versatile, learnable activation functions
for normalizing flows, being basically a continuous flow
for each scalar variable $u$ separately and a 
scalar field $g$ implemented
by a neural network: 
\begin{align}
	a(u) := v(1)
	 \text{ with } v : [0,1]\rightarrow\R \nonumber \\ 
	\text{ being the solution of } \; \; \frac{\partial v}{\partial\tau} = g(\tau, v(\tau)),
	\quad   v(0)  := u
	\label{eq:umnn}
\end{align}

In consequence, they suffer from the same
issues as any continuous normalizing flow: they are slow as they
require explicit integration of the underlying ODE.
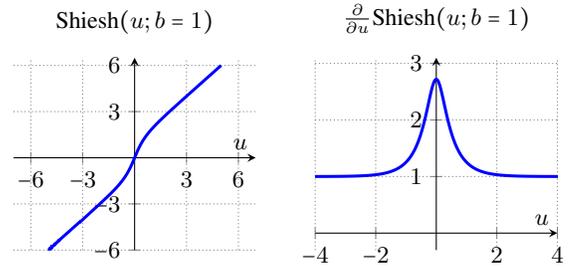
\begin{figure}[t]
	\centering
	\small
	\pgfkeys{/pgf/declare function={arcsinh(\x)=ln(\x+sqrt(\x^2+1));}}

\begin{tikzpicture}{\scale=0.8}
\begin{axis}[
	name=plot1,
    width=0.27\textwidth,
    axis lines=middle,
    xlabel={$u$},
    ymin=-6, ymax=6.5,
    xmin=-7, xmax=7,
    ytick={-6,-3, 0, 3, 6},
    xtick={-6,-3, 3, 6},
    title={$\act(u; b=1)$},
	grid=major,
	grid style={thin,densely dotted,black!50}]
\addplot[domain=-5:5, line width=0.4mm, samples=500, blue]{arcsinh(exp(1)*sinh(x))};
\end{axis}
\hspace{0.5cm}
\begin{axis}[
at=(plot1.right of south east), anchor=left of south west,
width=0.27\textwidth,
axis lines=middle,
xlabel={$u$},
ymin=-0.3, ymax=3.1,
xmin=-4, xmax=4,
ytick distance=1,
xtick={-4,-2, 0, 2, 4},
title={${\frac{\partial}{\partial u}\act(u; b=1)}$},
grid=major,
grid style={thin,densely dotted,black!50}]
\addplot[domain=-5:5, line width=0.4mm, samples=1000, blue]{exp(1)*cosh(x)/sqrt(1 + (exp(1)*sinh(x))^2)};
\end{axis}
\end{tikzpicture}
	\caption{(left) \act{} function, (right) partial derivative.}
	\label{fig:activation}
\end{figure}
Besides requirements E1--E3, activation functions will profit from
further desired properties:
\textbf{D1}. having an explicit inverse,
\textbf{D2}. having an explicit Jacobian and
\textbf{D3}. having a bounded gradient.
UMNN do not have desired property D1 
and provide no guarantees for property D3.

Instead of parameterizing the scalar field $g$ and learn it from data, we
make an educated guess and choose a specific function with
few parameters for which eq.~\ref{eq:umnn} becomes
explicitly solvable and requires no numerics at runtime: for the scalar field
$g(\tau, a; b):= \tanh(b \cdot a(\tau))$ the resulting ODE
\begin{align*}
	\frac{\partial v}{\partial\tau} = \tanh(b \cdot v(\tau)), 
	\quad   v(0)  := u
\end{align*}
has an explicit solution (Sec. G)
\begin{equation*}
	v(\tau; u,b) =  \frac{1}{b}\sinh^{-1}\big(e^{b\cdot \tau}\cdot \sinh(b\cdot u)\big)
\end{equation*}
yielding our activation function \act:
\begin{align}
	\act(u; b) \, := \, a(u) \, &:= \, v(1; u,b) \, \nonumber \\
	&= \, \frac{1}{b}\sinh^{-1}\big(e^{b} \sinh(b\cdot u)\big)
	\label{eq:fa}
\end{align}
being invertible, covering the whole real line and having no zero gradients (E1--E3)
and additionally
with analytical inverse and gradient (D1 and D2)
\begin{align}
	\begin{aligned}
	\fa^{-1}(u; b) & = \frac{1}{b}\sinh^{-1}\big(e^{-b}\cdot \sinh(b\cdot u)\big) 
	\\
	\frac{\partial}{\partial u} \act(u; b) & = 
	\frac{e^b \cosh(b\cdot u)}{\sqrt{1 + \big(e^b \sinh(b\cdot u)\big)^2}}
	\end{aligned}
\label{eq:jfa}
\end{align}
and bounded gradient (D3) in the range $(1, e^b]$ (see appendix G.4). Figure~\ref{fig:activation}
shows a function plot. In our experiments we fixed its parameter $b = 1$.

Since the $\fa$ is applied element-wise, it does not violate the Requirements 1, 2 and 3 in Section~\ref{sec:prilims}.:

\section{Overall ProFITi Model Architecture}
\label{sec:profiti}

\begin{figure}[t]
	\centering
	\small
		\begin{tikzpicture}[scale=0.9, transform shape]
	\node (xobs) at (-0.42,0) {$x\obs$};
	\node (xqu) at (0.2,0) {$x\qu$};
	\node (S) at (1.6,0) {};
	\node (y) at (1.8,0) {$y$};
	\node (S1) at (6.0,0) {$S$};
	\node (xqu1) at (6.8,0) {$x\qu$};
	\draw[dashed] (-0.6, -0.35) -- node[above] {(inputs)} (7,-0.35);
	\node [draw, rectangle, fill=cyan!30, rounded corners] (argsort) at (6.4, -0.8) {$\argsort$};
	
	\node[draw, rectangle, fill=yellow!30, rounded corners, below of=y, node distance=0.8cm] (sorty) {sort};
	\node[draw, rectangle, fill=yellow!30, rounded corners, below of=xqu, node distance=0.8cm] (sortx) {sort};
	\node[right of = sortx, node distance=0.8cm] (pi) {$\pi$}; 
	\node[draw, rectangle, fill=blue!30, rounded corners] (grafiti) at (-0.15,-1.6) {GraFITi};
	\node[draw, rectangle, fill=red!30, rounded corners, minimum width=2.1cm] (el0) at ([yshift=-1.6cm, xshift=-0.6cm]sorty) {$\text{EL}^{(0)}$};
	\node[draw, rectangle, fill=red!30, rounded corners, below of=el0, node distance=1.2cm] (f1) {profiti-block$^{(1)}$};
	\node[draw, rectangle, fill=red!30, rounded corners, below of=f1, node distance=1.2cm] (fl) {profiti-block$^{(l)}$};
	\node[draw, rectangle, fill=red!30, rounded corners, below of=fl, node distance=1.2cm] (fL) {profiti-block$^{(L)}$};
	\node[draw, rectangle, rounded corners, dotted, thick, label=60:{$f^{-1}$}, minimum width=2.3cm, minimum height=4.5cm] (profiti-block-block) at ([yshift=0.6cm]fl) {};
	\node[below of = fL, node distance=1.5cm] (z) {$z$};
	\node[below right of = z, node distance=1.2cm] (density) {$\hat{p}(y) := p_Z(z)\left|\text{det}\left(\frac{\partial z}{\partial y}\right)\right|$};
	
	\draw[dashed] ([xshift=-1.5cm, yshift=0.6cm]z.center) -- node[below] [xshift=0.8cm]{(outputs)} ([xshift=1.5cm, yshift=0.6cm]z.center);
	
	\node[draw, rectangle, fill=green!20, rounded corners, right of=fl, node distance=3cm] (ell) {$\text{EL}^{(l)}$};
	\node[draw, rectangle, fill=green!20, rounded corners, above of=ell, node distance=1cm] (sital) {$\stisatt^{(l)}$};
	\node[draw, rectangle, fill=green!20, rounded corners, below of=ell, node distance=1cm] (shiesh) {$\fa$};
	\node[draw, rectangle, fill=none] (atril) at ([xshift=2.3cm, yshift=0.3cm]sital) {$A^{\text{tri}^{(l)}}$};
	\node[draw, rectangle, fill=none, below of = atril, node distance=1.5cm] (scal) {$\text{NN}^{\text{sca}^{(l)}}$};
	\node[draw, rectangle, fill=none, below of = scal, node distance=1cm] (trsl) {$\text{NN}^{\text{trs}^{(l)}}$};
	\node[right of=atril, node distance=1.2cm, inner sep=0] (x) {\normalsize $\bigotimes$};
	\node[below of=x, node distance=1.5cm, inner sep=0] (dot) {\normalsize $\bigodot$};
	\node[below of=dot, node distance=1cm, inner sep=0] (plus) {\normalsize $\bigoplus$};
	\draw[-latex, thick] (S1) -- (argsort.120);
	\draw[-latex, thick] (xqu1) -- (argsort.60);
	\draw[-latex, thick] (argsort.south) -- node [yshift=-3mm]{$\pi$} ([yshift=-4mm]argsort.270);
	\draw[-latex, thick] (xobs.250) -- ([yshift=-1.1cm]xobs.250);
	\draw[-latex, thick] (sortx.250) -- ([yshift=-0.35cm]sortx.250);
	\draw[-latex, thick] (pi) -- (sortx);
	\draw[-latex, thick] (pi) -- (sorty);
	\draw[-latex, thick] (xqu) -- (sortx);
	\draw[-latex, thick] (y) -- (sorty);
	\draw[-latex, thick] (grafiti.south) |- node [xshift=-2mm, yshift=2mm]{$x$} (el0.west);
	\draw[-latex, thick] (grafiti.south) |-   (f1.west);
	\draw[-latex, dashed, thick] (grafiti.south) |-   (fl.west);
	\draw[-latex, dashed, thick] (grafiti.south) |-   (fL.west);
	\draw[-latex, thick] (fL.south) --   (z);
	\draw[-latex, thick] (sorty) |- +(0,-5mm) -|  (el0);
	\draw[-latex, thick] (el0) -- node [xshift=-3mm]{$y^{(1)}$} (f1);
	\draw[-latex, thick, dashed] (f1) -- node [xshift=-3mm]{$y^{(l)}$} (fl);
	\draw[-latex, thick, dashed] (fl) -- node [xshift=-3mm]{$y^{(L)}$} (fL);
	\draw[-latex, thick] ([yshift=4mm]sital.north) -- (sital.north);
	\draw[-latex, thick] (sital) -- (ell);
	\draw[-latex, thick] (ell) -- (shiesh);
	\draw[-latex, thick] (shiesh.south) -- ([yshift=-4mm]shiesh.south);
	\draw[-latex, thick] ([xshift=-10mm]sital.center) -- node [yshift=2mm]{$x$} (sital.west);
	\draw[-latex, thick] ([xshift=-10mm]ell.center) -- node [yshift=2mm]{$x$} (ell.west);
	\draw[-latex, thick] (atril) -- (x);
	\draw[-latex, thick] (scal) -- (dot);
	\draw[-latex, thick] (trsl) -- (plus);
	\draw[-latex, thick] (dot.south) -- (plus);
	\draw[-latex, thick] ([yshift=6mm]x.north) -- (x.north);
	\draw[-latex, thick] (x.south) -- (dot);
	\draw[-latex, thick] (plus.south) -- ([yshift=-6mm]plus.south);
	\draw[-latex, thick] ([xshift=-12mm]atril.center) -- node [yshift=2mm]{$x$} (atril.west);
	\draw[-latex, thick] ([xshift=-12mm]scal.center) -- node [yshift=2mm]{$x$} (scal.west);
	\draw[-latex, thick] ([xshift=-12mm]trsl.center) -- node [yshift=2mm]{$x$} (trsl.west);
	\draw[-latex, thick] (z) -- ([yshift=-0.4cm]z.south);
	\node[draw, rectangle, minimum height=3.4cm, minimum width=1.8cm, rounded corners] (profitiblock) at ([xshift=-2mm]ell) {};
	\node[draw, rectangle, minimum height=1.2cm, minimum width=2.7cm, rounded corners] (sitablock) at ([xshift=1mm]atril) {};
	\node[draw, rectangle, minimum height=1.8cm, minimum width=2.7cm, rounded corners] (ellblock) at ([yshift=-5mm, xshift=1mm]scal) {};
	\draw[-latex, dashed] (ell.east) -- (ellblock.north west);
	\draw[-latex, dashed] (ell.east) -- (ellblock.south west);
	\draw[-latex, dashed] (sital.east) -- (sitablock.north west);
	\draw[-latex, dashed] (sital.east) -- (sitablock.south west);
	\draw[-latex, dashed] (fl.east) -- (profitiblock.north west);
	\draw[-latex, dashed] (fl.east) -- (profitiblock.south west);
\end{tikzpicture}
	\caption{ProFITi architecture; $\bigotimes$: dot product, $\bigodot$: Hadamard product, $\bigoplus$: addition.
	Functions referred to their equation numbers: sort, $\argsort$ (eq. \ref{eq:pi}), GraFITi (eq. \ref{eq:grafiti}), SITA (eq.~\ref{eq:SITA}), EL (eq. \ref{eq:ft}), Shiesh (eq. \ref{eq:fa}).
	For efficiency, we perform, sorting only once directly on $x\qu$ and $y$.}
	\label{fig:my_figure}
\end{figure}
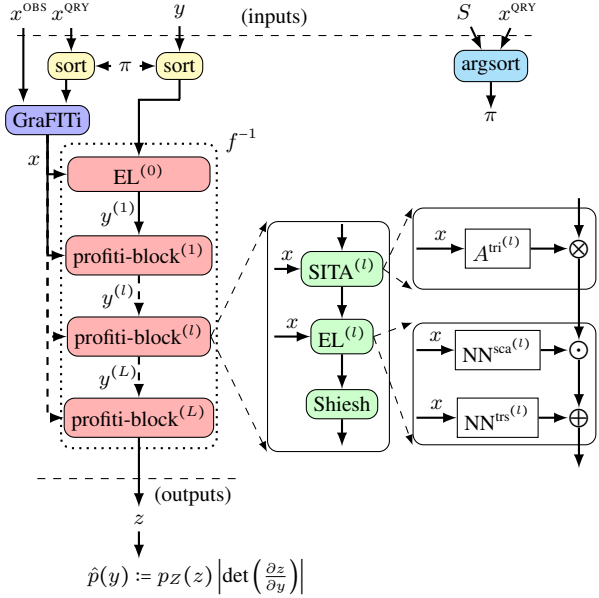

Invertible attention and the \act{} activation function systematically model
inter-dependencies between variables and non-linearity respectively, but do not
move the zero point. To accomplish the latter, we use a third layer called \textbf{elementwise linear transformation layer (\ft)}:
\begin{align}
	\ft{}(y_k; x_k) := y_k \cdot {\scale(x_k)} + \transl\left(x_k\right)
	\label{eq:ft}
\end{align}
where $\scale$ and $\transl$  are neural networks for scaling and translation.
$\scale$ is equipped with a $\exp(\tanh)$ output function
to make it positive and bounded, guaranteeing the inverse.
We combine 
all three layers from eq.~\ref{eq:isatt},  \ref{eq:fa}, and \ref{eq:ft}
to a block 
\begin{equation}
	\text{profiti-block}(y; x):= \act( \ft{}( \stisatt (y; x); x))
	\label{eq:profiti-block}
\end{equation}
and stack $L$ of those blocks to build the inverse transformation $f^{-1}$ of our conditional invertible flow model ProFITi.
We add a transformation layer with slope fixed to $1$ as initial encoding on the $y$-side
of the model.
See Figure~\ref{fig:my_figure} for an overview of its architecture.
As shown in the figure, for efficiency reasons we perform
sorting (eq.~\ref{eq:SITA}) only once
directly on the queries $x\qu$ and answers $y$.

\paragraph{Encoder for Query embedding.}
As discussed in Section~\ref{sec:inv_cond_flow}, for probabilistic time series forecasting we have to condition
on both, the past observations $x\obs$ and the queried time point/channel pairs
$x\qu$ of interest. While in principle any equivariant encoder
could be used, an encoder that leverages the relationships between those two pieces
of the conditioner is crucial. 
We use GraFITi~\citep{Yalavarthi2024.GraFITi}, a graph based equivariant point forecasting model for IMTS
that provides state-of-the-art performance (in terms of accuracy and efficiency) as encoder
\begin{align}
	(x_1,\ldots,x_K) := \qem(x\qu_1, \ldots, x\qu_K, x\obs) \label{eq:grafiti}
\end{align}
The Grafiti encoder is
trained end-to-end within the Profiti model, we did not pretrain it.

Note that for each query, other IMTS forecasting models yield a scalar, the predicted
value, not an embedding vector.
While it would be possible to use IMTS forecasting models as (scalar) encoders,
due to their limitations to a single dimension we did not follow up on this idea.

\paragraph{Training.}
We train the ProFITi model $\hat p$ 
for the normalized joint negative log-likelihood loss (njNLL; eq.~\ref{eq:njnl})
which written in terms of the transformation $f^{-1}(\cdot; \cdot; \theta)$
of the normalizing flow and its parameters $\theta$  yields:
\begin{align}
	\ell^\text{njNLL}(\theta) &:= \ell^\text{njNLL}(\hat p; p)
	\nonumber \\ 
	&= \mathop{\mathbb{-E}}_{(x\obs, x\qu, y)\sim p} \frac{1}{|y|}\log p_Z(f^{-1}(y; x\obs, x\qu; \theta)) \nonumber \\  
	& \quad \quad \Big| \text{det}\Big(\frac{\partial f^{-1}(y; x\obs, x\qu; \theta)}{\partial y} \Big) \Big|
	\label{eq:loss}
\end{align}

\section{Experiments}
\label{sec:exp}

\begin{table*}[t]
	\centering
	\caption{Normalized Joint Negative Log-likelihood (njNLL), lower the better, best in bold, OOM indicates out of memory error, $\uparrow$ shows imporvement in njNLL w.r.t. next best model.}
\label{tab:njnl}
\begin{tabular}{lrrrrrrrr}
	\toprule
	&	\multicolumn{1}{c}{USHCN} & $\frac{\text{time}}{\text{epoch}}$	& \multicolumn{1}{c}{Physioinet'12}	& $\frac{\text{time}}{\text{epoch}}$	&	\multicolumn{1}{c}{MIMIC-III}	&	$\frac{\text{time}}{\text{epoch}}$	&	\multicolumn{1}{c}{MIMIC-IV}	&	$\frac{\text{time}}{\text{epoch}}$	\\
	\midrule
	GPR		&	2.011$\pm$1.376	&	2s	&	1.367$\pm$0.074	&	35s	&		3.146$\pm$0.359	&	71s	&		2.789$\pm$0.057 &	227s\\
	HETVAE	&	{198.9$\pm$397.3}	&	1s&		0.561$\pm$0.012	&8s	&		{0.794$\pm$0.032}	&	8s	&	OOM	&	$-$\\
	GRU-ODE	&	0.766$\pm$0.159	&	100s&		0.501$\pm$0.001	&	155s	&		0.961$\pm$0.064		&	511s	&		0.823$\pm$0.318	&	1052s	\\
	Neural-Flows	&	0.775$\pm$0.152	&	21s&		{0.496$\pm$0.001}	&	34s	&		0.998$\pm$0.177	&	272s &		{0.689$\pm$0.087}	& 515s		\\
	CRU	&	0.761$\pm$0.191	&	35s&		0.741$\pm$0.001	&	40s	&		1.234$\pm$0.076	&	131s	&		OOM		&	$-$	\\
	GraFITi+	&	0.489$\pm$0.173	&	3s&		0.367$\pm$0.021	& 32s	&		0.721$\pm$0.053	&	80s&		0.287$\pm$0.040	&	84s	\\
	\midrule
	ProFITi (ours)	&	\bftab{-3.226$\pm$0.225}&	6s &		\bftab{-0.647$\pm$0.078} &	59s	&		\bftab{-0.377$\pm$0.032}	&	97s	&		\bftab{-1.777$\pm$0.066}		&		123s\\	
	\multicolumn{1}{c}{$\uparrow$}	&	\multicolumn{1}{c}{3.5}	&&	\multicolumn{1}{c}{1.0}	&&	\multicolumn{1}{c}{1.1}	&&	\multicolumn{1}{c}{2.1}	&\\
	\bottomrule
\end{tabular}
\end{table*}
\begin{table*}[t]
	\centering
	\caption{Results for Marginal Negative
		Log-likelihood (mNLL), lower the
		better. Best in bold and second best in italics.
		ProFITi\_marg is ProFITi trained for marginals.}
	\label{tab:mnll}
	\begin{tabular}{l r r r r}
		\toprule
		& \multicolumn{1}{c}{USHCN} & \multicolumn{1}{c}{Physionet'12}       & \multicolumn{1}{c}{MIMIC-III} & \multicolumn{1}{c}{MIMIC-IV} \\
		\midrule
		HETVAE         & 168.1$\pm$335.5 & 0.519$\pm$0.018 & 0.947$\pm$0.071 & OOM \\
		GRU-ODE        & 0.776$\pm$0.172 & 0.504$\pm$0.061 & 0.839$\pm$0.030 & 0.876$\pm$0.589 \\
		Neural-Flows   & 0.775$\pm$0.180 & 0.492$\pm$0.029 & 0.866$\pm$0.097 & 0.796$\pm$0.053	 \\
		CRU            & 0.762$\pm$0.180 & 0.931$\pm$0.019 & 1.209$\pm$0.044 & OOM \\
		GraFITi+       & \textit{0.462$\pm$0.122} & \textit{0.505$\pm$0.015} & \textit{0.657$\pm$0.040} & \textit{0.351$\pm$0.045} \\
		\midrule
		ProFITi\_marg (ours)	&	\textbf{-2.575$\pm$1.336}	&	\textbf{-0.368$\pm$0.033}	&	\textbf{0.092$\pm$0.036}	&	\textbf{-0.782$\pm$0.023}	\\
		\bottomrule
	\end{tabular}
\end{table*}
\subsection{Experiment for Joint Likelihoods}
\label{sec:exp_njnl}

\paragraph{Datasets. }
  We use $3$ publicly available real-world medical IMTS datasets:
  \textbf{MIMIC-III}~\citep{Johnson2016.MIMICIII},
  \textbf{MIMIC-IV}~\citep{Johnson2021.Mark}, and
  \textbf{Physionet'12}~\citep{Silva2012.Predicting}.
Datasets contain ICU patient records collected over $48$ hours.
The preprocessing procedures outlined
in~\cite{Yalavarthi2024.GraFITi,Bilos2021.Neural,DeBrouwer2019.GRUODEBayes}
were applied. 
Observations in Physionet'12, MIMIC-III and MIMIC-IV
were rounded to intervals of $1$ hr, $30$ min and $1$ min respectively.
We also evaluated on publicly available climate dataset
\textbf{USHCN}~\citep{Menne2015.United}.
It consists of climate data observed for 150 years from 1218 weather stations in USA.

\paragraph{Baseline Models. }
\textbf{ProFITi} is compared to $3$ probabilistic IMTS forecasting models:
  \textbf{CRU}~\citep{Schirmer2022.Modeling},
  \textbf{Neural-Flows}~\citep{Bilos2021.Neural}, and
  \textbf{GRU-ODE-Bayes}~\citep{DeBrouwer2019.GRUODEBayes}.
{To disentangle lifts originating from GraFITi (encoder) and those 
originating from ProFITi, we add \textbf{GraFITi+} as a baseline}.
GraFITi+ predicts an elementwise mean and variance
of a normal distribution.
As often interpolation models can be used seamlessly for forecasting, too,
we include \textbf{HETVAE}~\citep{Shukla2022.Heteroscedastic},
a state-of-the-art probabilistic interpolation model, for comparison.
Furthermore, we include Multi-task Gaussian Process Regression 
(\textbf{GPR};~\citealp{Durichen2015.Multitask})
as a baseline able to provide joint densities.

\paragraph{Protocol.}
We split the dataset into Train, Validation and Test in ratio 70:10:20, respectively.
We select the hyperparameters from $10$ random hyperparameter configurations
based on their validation performance.
We perform $5$ fold cross validation with the chosen hyperparameters.
Following~\cite{Bilos2021.Neural} and \cite{Yalavarthi2024.GraFITi},
we use the first $36$ hours as observation range and forecast the next $3$ time steps for medical datasets and
first $3$ years as observation range and forecast the next $3$ time steps for climate dataset.
Note that $3$ time steps meaning do not mean $3$ observations.
For example, in Physionet'12, $K$ varies between $3$ and $49$.
All models are implemented in PyTorch and run on GeForce RTX-3090 and 1080i GPUs.
We compare the models for Normalized Joint Negative Log-likelihood (njNLL) loss (eq.~\ref{eq:njnl}).
Except for GPR, and ProFITi, we take the average of the marginal negative log-likelihoods
of all the observations in a series to compute njNLL for that series.

Sampling-based metrics like the Energy Score \citep{Gneiting2007.Strictly} not only suffer from the curse of dimensionality but also evaluate multivariate distributions improperly \citep{Marcotte2023.Regionsb}. Similarly, Continuous Ranked Probability Score sum (CRPS-sum; \citealp{Rasul2021.Multivariate}) for multivariate probabilistic forecasting can be misled by simple noise models, where random forecasts may outperform state-of-the-art methods~\citep{Koochali2022.Random}.

\paragraph{Results. }
Table~\ref{tab:njnl} demonstrates the Normalized Joint Negative 
Log-likelihood (njNLL, lower the better) and run time per epoch for all the datasets.
Best results are presented in bold.
ProFITi outperforms all the prior approaches with significant margin on all the four datasets.
While GraFITi+ is the second-best performing model, 
ProFITi outperforms it by an average gain of 2.0 in njNLL.
We note that although GPR is predicting joint likelihoods, it performs poorly,
likely because of having very few parameters.
We note that njNLL of HETVAE is quite high for the USHCN dataset.
The reason is HETVAE predicted a very small
variance ($10^{-4}$) for $1$ sample 
whose predicted mean is farther from the target.
We do not provide results for CRU on MIMIC-IV 
as our GPU (48GB VRAM) gives out of memory errors.
The reason for such high likelihoods compared to the baseline models is
i.) not assuming a Gaussian underlying distribution and ii.) directly
predicting joint distributions (see Section~\ref{sec:ablations}).

\subsection{Auxiliary Experiments for Marginals}

Existing models in the related work \cite{DeBrouwer2019.GRUODEBayes} and \cite{Bilos2021.Neural} cannot predict
multivariate distributions, hence their evaluation was restricted
to Marginal Negative Log-likelihood (mNLL):
\begin{align}
	&\ell^{\text{mNLL}}(\hat p; \D^{\text{test}})  :=   - 
	\frac{\sum\limits_{(x\obs,x\qu,y) \in D^\text{test}}
		\sum\limits_{k=1}^{|y|}\log \hat{p}\left(y_k| x\obs, x\qu_k \right)}{\sum\limits_{(x\obs, x\qu, y) \in D^\text{test}} |y|}
	\label{eq:mnl}
\end{align}
We trained ProFITi only for marginals by removing \stisatt{} from the architecture and called ProFITi\_marg.
Results presented in Table~\ref{tab:mnll} shows that ProFITi\_marg outperforms baseline models again.

\begin{table*}[t]
	\centering
	\caption{Results for CRPS score,
		lower the better. 
		Best in bold and second best in italics}
	\label{tab:crps}
	\begin{tabular}{lcccc}
		\toprule
		& \multicolumn{1}{c}{USHCN} & \multicolumn{1}{c}{Physionet'12}       & \multicolumn{1}{c}{MIMIC-III} & \multicolumn{1}{c}{MIMIC-IV} \\
		\midrule
		HETVAE			&	0.229$\pm$0.017	&	0.278$\pm$0.001	&	0.359$\pm$0.009	&	OOM	\\
		GRU-ODE			&	0.313$\pm$0.012	&	0.278$\pm$0.001	&	0.308$\pm$0.005	&	0.281$\pm$0.004\\
		Neural-flows	&	0.306$\pm$0.028	&	0.277$\pm$0.003	&	0.308$\pm$0.004	&	0.281$\pm$0.004	\\
		CRU				&	0.247$\pm$0.010	&	0.363$\pm$0.002	&	0.410$\pm$0.005	&	OOM \\
		GraFITi+		&	\textit{0.222$\pm$0.011}	&	\textit{0.256$\pm$0.001}	&	\textit{0.279$\pm$0.006}	&	\textit{0.217$\pm$.005} \\
		\midrule
		ProFITi\_marg (ours)	&	\textbf{0.192$\pm$0.019}	&	\textbf{0.253$\pm$0.001}	&	\textbf{0.276$\pm$0.001}	&	\textbf{0.206$\pm$0.001} \\
		\bottomrule
	\end{tabular}
\end{table*}
\begin{table*}
	\centering
	\caption{Results for MSE, lower the better. Best in bold and second best in italics.}
	\label{tab:mse}
	\begin{tabular}{lcccc}
		\toprule
		& \multicolumn{1}{c}{USHCN} & \multicolumn{1}{c}{Physionet'12}       & \multicolumn{1}{c}{MIMIC-III} & \multicolumn{1}{c}{MIMIC-IV} \\
		\midrule
		HETVAE			&	0.298$\pm$0.073	&	0.304$\pm$0.001	&	0.523$\pm$0.055	&	OOM	\\
		GRU-ODE			&	0.410$\pm$0.106	&	0.329$\pm$0.004	&	0.479$\pm$0.044	&	0.365$\pm$0.012	\\
		Neural-Flows	&	0.424$\pm$0.110	&	0.331$\pm$0.006	&	0.479$\pm$0.045	&	0.374$\pm$0.017	\\
		CRU				&	\textit{0.290$\pm$0.060}	&	0.475$\pm$0.015	&	0.725$\pm$0.037	&	OOM	\\
		GraFITi			&	\textbf{0.256$\pm$0.027}	&	\textbf{0.286$\pm$0.001}	&	\textbf{0.401$\pm$0.028}	&	\textbf{0.233$\pm$0.005}	\\
		\midrule
		ProFITi\_marg (ours)	&	0.300$\pm$0.053	&	\textit{0.295$\pm$0.002}	&	\textit{0.443$\pm$0.028}	&	\textit{0.246$\pm$0.004}	\\
		\bottomrule
	\end{tabular}
\end{table*}

In addition to mNLL, we also compare the models in terms of
CRPS score, a metric for marginal probabilistic forecasting
in Table~\ref{tab:crps}; and MSE, a metric for point forecasting in Table~\ref{tab:mse}.
While all the baseline models can compute CRPS and MSE
explicitly from Gaussian parameters,
ProFITi requires sampling.
For this, we randomly sampled $100$ samples.
We computed MSE from robustious mean of samples 
which is the mean of the samples with outliers removed.

ProFITi\_marg outperforms all the baseline models in terms of CRPS score.
On the other hand, for point forecasting, ProFITi\_marg is the second best model in 
terms of MSE, and GraFITi remains the best.
While we leverage GraFiTi as the encoder for ProFITi,
ProFITi incorporates various components specifically
designed to predict distributions, even if this sacrifices
some point forecast accuracy.
Also, GraFITi is trained to predict a Gaussian distribution, and the Mean Squared Error (MSE) is probabilistically equivalent to the negative log-likelihood of a Gaussian distribution with a fixed variance. This probabilistic interpretation can lead to better performance compared to models that are not specifically trained for Gaussian distributions.
Additionally, models for uncertainty quantification often 
suffer from slightly worse point forecasts, as observed in several
studies~\cite{Lakshminarayanan2017.Simple,Seitzer2021.Pitfalls,Shukla2022.Heteroscedastic}.
In the domain of probabilistic forecasting, the primary metric of interest is
negative log-likelihood, and ProFITi demonstrates superior performance.

\subsection{Ablation studies: Varying model components}
\label{sec:ablations}
\begin{table}[h]
	\centering
	\small
	\captionof{table}{Varying model components; Metric: njNLL;
		ProFITi-A+B: component A is removed and B is added.}
	\label{tab:abls}
	\begin{tabular}{l r}
		\toprule
		Model &	Physionet2012	\\
		\midrule
		ProFITi &	{-0.647$\pm$0.078}	\\
		ProFITi-$\fc$	&	-0.470$\pm$0.017 \\
		ProFITi-$\fa$	&	0.285$\pm$0.061	\\
		ProFITi-$\fc$-$\fa$	&	0.372$\pm$0.021\\
		\midrule
		ProFITi-$\fa$+PReLU	&	0.384$\pm$0.060	\\
		ProFITi-$\fa$+LReLU	&	\multicolumn{1}{c}{NaN Error}	\\
		ProFITi-$A^\text{tri}$+$A^\text{iTrans}$	&	-0.199$\pm$0.141\\
		ProFITi-$A^\text{tri}$+$A^\text{reg}$	&	-0.778$\pm$0.016	\\ 
		\bottomrule
	\end{tabular}
\end{table}

To analyze ProFITi's superior performance, we conduct an ablation study on Physionet'12 (Table~\ref{tab:abls}). The \fa{} activation significantly improves performance by enabling learning of non-Gaussian distributions (compare ProFITi and ProFITi-\fa{}). Similarly, learning joint distributions (ProFITi) outperforms ProFITi-\fc{} in njNLL. ProFITi-\fc-\fa{}, which learns only Gaussian marginals, performs worse than ProFITi. Replacing \fa{} with PReLU (ProFITi-\fa+PReLU) degrades performance, and using Leaky-ReLU (LReLU) results in small Jacobians and vanishing gradients.
The $A^\text{iTrans}$ (ProFITi-$A^\text{tri}+A^\text{iTrans}$) variant performs poorly due to its limitation to positive covariances. While ProFITi with $A^\text{reg}$ or $A^\text{tri}$ shows similar results, $A^\text{reg}$ faces scalability challenges: computing the determinant of a full attention matrix has $\bigO(K^3)$ complexity, compared to $\bigO(K)$ for triangular matrices. Additionally, $A^\text{reg}$ underperforms for longer forecast horizons
(see appendix H.2).
Further experiments on varying observation and forecast horizons and sparsity levels are provided in appendix H.

\section*{Conclusions}
In this work, we propose a novel model ProFITi for probabilistic forecasting
of irregularly sampled multivariate time series
with missing values using conditioning normalizing flows.
ProFITi is designed to learn multivariate conditional distribution of varying
length sequences.
To the best of our knowledge, ProFITi is the first normalizing
flow based model that can handle irregularly sampled time series with missing values.
We propose two novel model components, sorted invertible triangular attention
and \fa{} activation function
in order to learn any random target distribution.
Our experiments on four real-world datasets demonstrate that
ProFITi provides significantly better likelihoods than 
existing models.

\section*{Acknowledgments}
\begin{small}
	This work was supported by the Federal Ministry for Economic Affairs and Climate Action (BMWK), Germany, within the framework of the IIP-Ecosphere project (project number: 01MK20006D); co-funded by the Lower Saxony Ministry of Science and Culture under grant number ZN3492 within the Lower Saxony ``Vorab'' of the Volkswagen Foundation and supported by the Center for Digital Innovations (ZDIN); and also by the German Federal Ministry of
	Education and Research (BMBF) through the Program ”International Future Labs
	for Artificial Intelligence" (Grant 1DD20002A -- KIWI-BioLab)”
\end{small}

\bibliography{icml_24_profiti}

\bibliographystyle{aaai25}

%
\appendix
\begin{table*}[h]
	\centering
	\small
	\caption{Statistics of the datasets used our experiments. Sparsity means the percentage of missing observations in the time series. Time Sparsity means the percentage of time steps missing after discretizing the time series.}
	\label{tab:dset}
	\begin{tabular}{lcccccc}
		\toprule
		Name & \#Samples & \#Chann. & Max. len. & Max. Obs.& Sparsity	& Time Sparsity	\\
		\hline
		USHCN	&	1100	&	5	&	290	&	320	&	$77.9\%$	&	$84.3\%$\\
		Physionet'12 & 12,000 & 37 & 48 & 520 &$85.7\%$&	$4.4\%$\\
		MIMIC-III & 21,000& 96 & 96 & 710 &$94.2\%$	&	$72.9\%$\\
		MIMIC-IV & 18,000 & 102 & 710 & 1340 &$97.8\%$ &	$94.9\%$	\\
		\bottomrule
	\end{tabular}
\end{table*}

\section{Dataset Details}
\label{sec:data_supp}
Four datasets are used for evaluating the proposed model.
Basic statistics of the datasets is provided in Table~\ref{tab:dset}.

\paragraph{Physionet2012~\cite{Silva2012.Predicting}} encompasses the medical records of 12,000 patients 
who were hospitalized in the ICU.
During the initial 48 hours of their admission, 37 vital signs were measured.
We follow the protocol used in previous studies~\cite{Che2018.Recurrent,Cao2018.Brits,Tashiro2021.CSDI,Yalavarthi2024.GraFITi}.
After pre-processing, dataset consists of hourly observations making a total of up to 48 observations in each series.

\paragraph{MIMIC-III~\cite{Johnson2016.MIMICIII}} constitutes a medical dataset 
containing data from ICU patients admitted to Beth Israeli Hospital. 
96 different variables from a cohort of 18,000 patients were observed over an approximately 48-hour period.
Following the preprocessing procedures outlined in \citep{Bilos2021.Neural,DeBrouwer2019.GRUODEBayes,Yalavarthi2024.GraFITi}, we rounded the observations to 30-minute intervals.

\paragraph{MIMIC-IV~\cite{Johnson2021.Mark}} is an extension of the MIMIC-III database,
incorporating data from around 18,000 patients admitted to the ICU at a tertiary academic medical center in Boston.
Here, 102 variables are monitored. We followed the preprocessing steps of~\citep{Bilos2021.Neural,Yalavarthi2024.GraFITi}
and rounded the observations to 1 minute interval.

\paragraph{USHCN~\cite{Menne2015.United}} is a climate dataset consists of the measurements of
5 variables (daily temperatures, precipitation and snow) observed
over 150 years from 1218 meteorological stations in the USA. We
followed the same pre-processing steps given in \citep{DeBrouwer2019.GRUODEBayes,Yalavarthi2024.GraFITi} and selected a subset of 1114 stations and an observation window of 4
years (1996-2000).

\section{Literature review extended}
\label{sec:lit_rev_ext}

We provide comparison of various models that are related to ProFITi are presented in Table~\ref{tab:related}.
Density estimation models like RealNVP~\cite{Dinh2017.Density},
Inverse Autoregressive Flows~\cite{Kingma2016.Improved},
Selvester Flows~\cite{vandenBerg2018.Sylvester},
Residual Flow~\cite{Behrmann2019.Invertible},
and Graphical Normalizing Flows~\cite{Wehenkel2021.Graphical}
are density estimation models. They cannot be applied to
IMTS.
Conditional normalizing flows cab be built using the above density estimation models.
Additionally some important conditional normalizing flows exist such as
Cond. Normalizing Flow~\cite{Winkler2019.Learning},
Attn. Flow~\cite{Sukthanker2022.Generative},
or Inv. Dot. Attention Flow~\cite{Zha2021.Invertible}.
Generally, these are applied only to fixed length vector or image datasets.
They cannot be extended to IMTS.
Continuous normalizing flows such as
E(N)~\cite{Satorras2021.Equivarianta},
GNF~\cite{Liu2019.Graph},
or SNF~\cite{Bilos2021.Normalizing}
can handle varying number of variables and permutation invariant.
However, they are applied for conditional inputs.
Normalizing flows for time series is relatively new area of research with a few works.
MAF~\cite{Rasul2021.Multivariate},
CTFP~\cite{Deng2020.Modeling},
NKF~\cite{deBezenac2020.Normalizing},
and QFR~\cite{Si2022.Autoregressive}
are applied for probabilistic time series forecasting.
They can handle varying length sequences of regularly sampled and fully observed time series.
However, they cannot handle missing values in time series, and also not permutation invariant.
Finally, IMTS models such as GRU-ODE~\cite{DeBrouwer2019.GRUODEBayes},
NeuralFlows~\cite{Bilos2021.Neural},
CRU~\cite{Schirmer2022.Modeling}
can predict only marginal distributions assuming the underlying data distribution is Gaussian.
HETVAE is a probabilistic interpolation model which also predicts only Gaussian marginals.
On the other hand GPR~\cite{Durichen2015.Multitask} predict joint distributions
but again restricted to Gaussians.
GraFITi~\cite{Yalavarthi2024.GraFITi} is a point forecasting model and cannot predict any probability distributions.

\begin{table*}
	\centering
	\fontsize{8}{9}\selectfont
	\caption{Summary of Important models that 1. can be applied to Time Series with irregular sampling (Irreg. Samp.), or missing values (Miss. Vals.), 2. can predict marginal distributions (Marg. Dist.) or joint distributions (Joint Dist.), 3. can learn on conditional densities (Condition), 4. density of sequences with variable lengths (Dynamic) or 5. Permutation Invariant (Perm. Inv.). Parametric distributions are denoted with (Param).}
	\label{tab:related}
	\begin{tabular}{l|ccccccc}
		\hline
		Model & Irreg Samp & Miss Vals & Marg Dist & Joint Dist & Cond & Dynam. & Perm. Inv \\ 
		&	& 	& 	& 	& (Req. 1)	& (Req. 2)	& (Req. 3) \\ \hline
		GRU-ODE~\citep{DeBrouwer2019.GRUODEBayes} & $\checkmark$ & $\checkmark$ & (Param) & $\cross$ & $\checkmark$ & $\checkmark$ & $\checkmark$ \\ 
		Neural Flows~\citep{Bilos2021.Neural} & $\checkmark$ & $\checkmark$ & (Param) & $\cross$ & $\checkmark$ & $\checkmark$ & $\checkmark$ \\ 
		CRU~\citep{Schirmer2022.Modeling} & $\checkmark$ & $\checkmark$ & (Param) & $\cross$ & $\checkmark$ & $\checkmark$ & $\checkmark$ \\ 
		GPR~\citep{Durichen2015.Multitask}	& $\checkmark$ & $\checkmark$ & (Param) & (Param) & $\checkmark$ & $\checkmark$ & $\checkmark$ \\ 
		HETVAE~\cite{Shukla2022.Heteroscedastic} & $\checkmark$ & $\checkmark$ & (Param) & $\cross$ & $\checkmark$ & $\checkmark$ & $\checkmark$ \\ 
		GraFITi~\citep{Yalavarthi2024.GraFITi} & $\checkmark$ & $\checkmark$ & $\cross$ & $\cross$ & $\checkmark$ & $\checkmark$ & $\checkmark$ \\ 
		\hline
		RealNVP~\citep{Dinh2017.Density} & $\cross$ & $\cross$ & $\cross$ & $\checkmark$ & $\cross$ & $\cross$ & $\cross$ \\ 
		Inv. Autoreg~\citep{Kingma2016.Improved} & $\cross$ & $\cross$ & $\cross$ & $\checkmark$ & $\cross$ & $\cross$ & $\cross$ \\ 
		Selv. Flow~\citep{vandenBerg2018.Sylvester} & $\cross$ & $\cross$ & $\cross$ & $\checkmark$ & $\cross$ & $\cross$ & $\cross$ \\ 
		Residual Flow~\citep{Behrmann2019.Invertible}	&$\cross$ & $\cross$ & $\cross$ & $\checkmark$ & $\cross$ & $\cross$ & $\cross$\\
		Graphical~\citep{Wehenkel2021.Graphical} & $\cross$ & $\cross$ & $\cross$ & $\checkmark$ & $\cross$ & $\cross$ & $\cross$ \\ 
		\hline
		Cond. NF~\cite{Winkler2019.Learning}	& $\cross$ & $\cross$ & $\cross$ & $\checkmark$ & $\checkmark$ & $\cross$ & $\cross$ \\
		Attn. Flow~\cite{Sukthanker2022.Generative}	&  $\cross$ & $\cross$ & $\cross$ & $\checkmark$ & $\checkmark$ & $\cross$ & $\cross$ \\
		Inv. Dot. Attn~\cite{Zha2021.Invertible} &  $\cross$ & $\cross$ & $\cross$ & $\checkmark$ & $\checkmark$ & $\cross$ & $\cross$ \\
		\hline 
		E(N)~\citep{Satorras2021.Equivarianta} & $\cross$ & $\cross$ & $\cross$ & $\checkmark$ & $\cross$ & $\checkmark$ & $\checkmark$ \\ 
		GNF~\citep{Liu2019.Graph} & $\cross$ & $\cross$ & $\cross$ & $\checkmark$ & $\cross$ & $\checkmark$ & $\checkmark$ \\ 
		SNF~\citep{Bilos2021.Normalizing} & $\cross$ & $\cross$ & $\cross$ & $\checkmark$ & $\cross$ & $\checkmark$ & $\checkmark$ \\ 
		\hline
		MAF~\citep{Rasul2021.Multivariate} & $\cross$ & $\cross$ & $\checkmark$ & $\checkmark$ & $\checkmark$ & (Time) & $\cross$ \\ 
		CTFP~\citep{Deng2020.Modeling} & $\checkmark$ & $\cross$ & $\checkmark$ & $\checkmark$ & $\checkmark$ & (Time) & $\cross$ \\ 
		NKF~\citep{deBezenac2020.Normalizing} & $\cross$ & $\cross$ & $\cross$ & $\checkmark$ & $\checkmark$ & (Time) & $\cross$ \\ 
		QFR~\citep{Si2022.Autoregressive} & $\cross$ & $\cross$ & $\cross$ & $\checkmark$ & $\checkmark$ & (Time) & $\cross$ \\ 
		\hline 
		ProFITi (ours) & $\checkmark$ & $\checkmark$ & $\checkmark$ & $\checkmark$ & $\checkmark$ & $\checkmark$ & $\checkmark$ \\
		\bottomrule
	\end{tabular}
\end{table*}

\section{SITA Examples}
\label{sec:sort_expl}

We provide further examples for implementing SITA here:

\begin{exple}[Demonstration of $S$ and $\pi$ for SITA, sort by channel id. followed by timepoint]
	Given $x\qu = ((1,2), (0,2), (2,1), (3,1), (0,1), (3,3))$ 
	where first and second elements in $x\qu_k$ indicates time and 
	channel respectively.
	Assume $S = 
	\begin{psmallmatrix}
		0 & 1 \\ 
		1 & 0
	\end{psmallmatrix}$. Then
	\begin{align*}
		\pi	& = \argsort(x\qu_1S, \ldots, x\qu_5S) \\
				& = \argsort((2,1), (2,0), (1,2), (1,3), (1,0), (3,3)) \\
				& = (5,3,4,2,1,6)
	\end{align*}
	Here, $S$ helps to sort the $x\qu$ first by channel and then by time.
\end{exple}

\begin{exple}[Demonstration of $S$ and $\pi$ for SITA, sort by timepoint in descending order followed by channel id. in ascending order]
	Given $x\qu = ((1,2), (0,2), (2,1), (3,1), (0,1), (3,3))$ 
	where first and second elements in $x_k$ indicates time and 
	channel respectively.
	Assume $S = 
	\begin{psmallmatrix}
		-1 & 0 \\ 
		0 & 1
	\end{psmallmatrix}$. Then
	\begin{align*}
		\pi	& = \argsort(x\qu_1S, \ldots, x\qu_5S) \\
		& = \argsort((-1,2), (0,2), (-2,1), (-3,1), (0,1), (-3,3)) \\
		& = (4, 6, 3, 1, 5, 2)
	\end{align*}
	Here, $S$ helps to sort the $x\qu$ first by time in descending order and then by channel in ascending order.
	Finally,
	${x}^\pi = ((3,1), (3,3), (2,1), (1,2), (0,1), (0,2))$.
\end{exple}

\begin{exple}[Demonstration of $S$ and $\pi$ for SITA, sort by timepoint followed by altered order of channel id.]
	Given $x\qu = ((1,2), (0,2), (2,1), (3,1), (0,1), (3,3))$ 
	where first and second elements in $x\qu_k$ indicates time and 
	channel respectively.
	Assume $S = 
	\begin{psmallmatrix}
		1 & 0 \\ 
		0 & f(\cdot)
	\end{psmallmatrix}$.
	Whenever we encounter a function in matrix, we perform function operation instead of product.
	$f(\cdot)$ alters the channel index.
	\begin{align*}
		f(1) = 3 \\
		f(2) = 1 \\
		f(3) = 2 \\
	\end{align*}
Then
	\begin{align*}
		\pi	& = \argsort(x\qu_1S, \ldots, x\qu_5S) \\
		& = \argsort((1,1), (0,1), (2,3), (3,3), (0,3), (3,2)) \\
		& = (2,5,1,3,6,4)
	\end{align*}
	Here, $S$ helps to sort the $x\qu$ first by time in descending order and then by channel in ascending order.
	Finally,
	${x\qu}^\pi = ((0,2), (0,1), (1,2), (2,1), (3,3), (3,1))$.
\end{exple}

\section{Invertibility of $A^\textnormal{reg}$}
\label{sec:Areg_invert_proof}

We prove that $A^\text{reg}$ presented in Section~\ref{sec:inv_cond_flow} is invertible.
\begin{lma}	For any $K\times K$ matrix $A$ and $\epsilon>0$, the matrix $\mathbb{I}_K + \frac{1}{\|A\|_2 + \epsilon}A$ is invertible. Here, $\|A\|_2\coloneqq\max\limits_{x\ne 0} \frac{\|Ax\|_2}{\|x\|}$ denotes the spectral norm.
	\label{lma:Areg}
\end{lma}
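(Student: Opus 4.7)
The plan is to show that the off-identity part of the matrix has spectral norm strictly less than $1$, which immediately rules out $-1$ as an eigenvalue and hence guarantees invertibility of $\mathbb{I}_K + M$.

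Concretely, I would set $M := \frac{1}{\|A\|_2 + \epsilon} A$ and compute
\begin{equation*}
\|M\|_2 = \frac{\|A\|_2}{\|A\|_2 + \epsilon} < 1,
\end{equation*}
using that the spectral norm is absolutely homogeneous and that $\epsilon > 0$. Since the spectral norm dominates the spectral radius, every eigenvalue $\mu$ of $M$ satisfies $|\mu| \le \|M\|_2 < 1$. In particular, $\mu \ne -1$ for every eigenvalue $\mu$ of $M$, so $0$ is not an eigenvalue of $\mathbb{I}_K + M$, which therefore is invertible.

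As an alternative finishing step one could invoke the Neumann series directly: because $\|M\|_2 < 1$, the series $\sum_{n=0}^{\infty} (-M)^n$ converges in operator norm and provides an explicit two-sided inverse of $\mathbb{I}_K + M$. Either formulation works; the spectral-radius argument is the shortest.

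The only subtle point — and the one I would double-check — is the edge case $A = 0$, where $\|A\|_2 = 0$ and $M = 0$, so $\mathbb{I}_K + M = \mathbb{I}_K$ is trivially invertible; the bound $\|M\|_2 = 0 < 1$ still holds because of the $\epsilon$ in the denominator, so the argument goes through uniformly. No further obstacle is expected: the proof is essentially a one-line application of a standard perturbation-of-identity fact, and the $\epsilon$-regularization is precisely what makes the strict inequality $\|M\|_2 < 1$ hold in all cases.
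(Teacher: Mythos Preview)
Your proof is correct and follows essentially the same idea as the paper's: both arguments hinge on the single observation that $\|M\|_2 = \frac{\|A\|_2}{\|A\|_2+\epsilon} < 1$. The paper phrases this as a direct contradiction (if $(\mathbb{I}_K+M)x=0$ for some $x\neq 0$, then $\|A\|_2\geq\|A\|_2+\epsilon$), whereas you route the same inequality through the spectral-radius bound to exclude the eigenvalue $-1$; these are two wordings of the same one-line fact.
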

\begin{proof}
	Assume it was not the case. Then there exists a non-zero vector $x$ such that $(\mathbb{I}_K + \frac{1}{\|A\|_2 + \epsilon}A)x=0$. But then
	$(\|A\|_2 + \epsilon)x = -Ax$, and taking the norm on both sides and rearranging yields $\|A\|_2\ge \frac{ \|Ax\|_2}{\|x\|_2} = \|A\|_2 + \epsilon > \|A\|_2$, contradiction! Hence the lemma.
\end{proof}

\commentout{
\section{Invariant conditional normalizing flows via continuous flows.}

Invariant conditional normalizing flow models have been developed
in the literature based on the \textbf{continuous flow} approach
\citep{Chen2018.Neural,Grathwohl2019.Scalable}, 
where the transformation $f$ is specified implicitly by an ordinary
differential equation with time-dependent vector
field $g: [0, 1]\times\R^K\rightarrow\R^K$:
\begin{align}
	\begin{aligned}
		f^{-1}(z) := v(1) \quad \text{ with } v : [0,1]\rightarrow\R^K \\
		\text{ being the solution of }
		\frac{\partial v}{\partial\tau} = g(\tau, v(\tau)),
		\quad   v(0)  := z
	\end{aligned}
	\label{eq:cnf}
\end{align}
$\tau$ often is called virtual time to clearly distinguish it from time as
an input variable. 
The vector field $g$ is represented by a parametrized function $g(\tau, v; \theta)$
and then can be learnt. Continuous flow models can be made conditional by
simply adding the predictors $x$ to the inputs of the vector field, too:
$g(\tau, v; x, \theta)$. 
Unconditional structured continuous flow models
can be made permutation invariant by simply making
the vector field permutation equivariant
\citep{Kohler2020.Equivariant,Li2020.Exchangeable,Bilos2021.Normalizing}:
$g(\tau, v^{\pi}; \theta)^{\pi^{-1}} =   g(\tau, v; \theta)$.
To make \textsl{conditional} structured continuous flow models
permutation invariant, the vector field has to be
\textbf{jointly permutation equivariant} in outputs $v$ and predictors $x$:
\begin{align*}
	g(\tau, v^{\pi}; x^{\pi}, \theta)^{\pi^{-1}} & =   g(\tau, v; x, \theta)
\end{align*}
}

\section{\act{} activation function}

\subsection{Solving ODE}
\label{sec:ode_solve}

The differential equation \( \frac{dv(\tau)}{d\tau} = \tanh(bv(\tau)), \quad v(0) := u \)
can be solved by separation of variables. However, we can also proceed as 
follows by multiplying the equation with $b\cosh(b\cdot v(\tau))$:
\begin{align*}
	&&  b \cosh(b v(\tau)) \frac{dv(\tau)}{d\tau}& = b \sinh(b  v(\tau))\\
	&\Leftrightarrow& \frac{d \sinh(b v(\tau))}{d\tau} &= b \sinh(bv(\tau)) \\
	&\Leftrightarrow& \sinh(b v(\tau))  &= C e^{b\tau}  \quad \text{for some $C$}\\
	&\Leftrightarrow& v(\tau) &=  \frac{1}{b} \sinh^{-1}(C e^{b\tau})  \quad \text{for some $C$}\; .
\end{align*}
The initial condition yields $C =  \sinh(bu)$

\subsection{Invertibility of \act{}}

A function $F:\R\to\R$ is invertible if it is strictly monotonically increasing.

\begin{thrm}
Function $\fa(u; b) = \frac{1}{b}\sinh^{-1}(e^{b} \sinh(b\cdot u))$ is strictly monotonically increasing for $u \in \R$.
\end{thrm}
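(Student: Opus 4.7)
The plan is to show strict monotonicity by computing the derivative and verifying that it is strictly positive on all of $\R$. Since $\fa(\cdot;b)$ is continuously differentiable (being a composition of smooth functions, and $\sinh^{-1}$ is smooth on all of $\R$), establishing $\frac{\partial}{\partial u}\fa(u;b) > 0$ for every $u \in \R$ is sufficient to conclude strict monotone increase, and hence invertibility onto the image.

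First I would apply the chain rule using the standard identity $\frac{d}{dx}\sinh^{-1}(x) = \frac{1}{\sqrt{1+x^2}}$ with inner function $x(u) = e^{b}\sinh(b\cdot u)$, whose derivative is $x'(u) = b\, e^{b}\cosh(b\cdot u)$. Multiplying by the outer factor $\frac{1}{b}$, the $b$ cancels and one recovers exactly the expression given in eq.~\ref{eq:jfa}:
\begin{equation*}
\frac{\partial}{\partial u}\fa(u;b) \;=\; \frac{e^{b}\cosh(b\cdot u)}{\sqrt{1+\bigl(e^{b}\sinh(b\cdot u)\bigr)^{2}}}.
\end{equation*}

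Next I would observe that the numerator is strictly positive: $e^{b} > 0$ for every real $b$, and $\cosh(b\cdot u) \geq 1 > 0$ for every real $u$. The denominator $\sqrt{1+(e^{b}\sinh(b\cdot u))^{2}}$ is bounded below by $1$ and is therefore strictly positive and well-defined (the argument of the square root is always $\geq 1$, so no issue with branch of the root). Hence the quotient is strictly positive on all of $\R$, and $\fa(\cdot;b)$ is strictly monotonically increasing.

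There is essentially no obstacle here beyond the mild bookkeeping needed to justify that $b \neq 0$ (the paper fixes $b=1$, so the $\frac{1}{b}$ factor and the cancellation in the chain rule are unambiguous). For completeness I would also remark that $\lim_{u\to\pm\infty}\fa(u;b) = \pm\infty$, so together with continuity and strict monotonicity this shows $\fa(\cdot;b)$ is a bijection of $\R$ onto $\R$, which is what is ultimately needed for the activation to serve as a standalone layer in a normalizing flow (property E1 combined with E2). The explicit inverse displayed in eq.~\ref{eq:jfa} can then be verified by direct substitution, obtained simply by replacing $e^{b}$ with $e^{-b}$ in the formula for $\fa$.
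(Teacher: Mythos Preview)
Your proposal is correct and follows essentially the same approach as the paper: compute the derivative (which the paper simply cites from eq.~\ref{eq:jfa}) and observe that the numerator $e^{b}\cosh(b\cdot u)$ is strictly positive while the denominator is a positive square root, hence the derivative is everywhere positive. Your version is in fact more careful than the paper's, since you explicitly derive the formula via the chain rule, note the denominator's positivity, and add the surjectivity remark.
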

\begin{proof}
	A function is strictly monotonically increasing if its first derivate is always positive. 
	From eq.~\ref{eq:jfa}, $\frac{\partial}{\partial u}\fa(u; b) := \frac{e^{b} \cosh(b\cdot u)}{\sqrt{1 + \left(e^{b\cdot \tau} \sinh(b\cdot u)\right)^2}}$.
	We known that $e^{b\cdot\tau}$ and $\cosh(u)$ are always positive hence
	$\frac{\partial }{\partial u}\fa(u; b)$ is always positive.
\end{proof}

\begin{figure}[t]
	\centering
	\captionsetup[subfigure]{justification=centering}
	\begin{tikzpicture}[scale=0.7, every node/.style={scale=0.8}]
		\draw[fill= blue, blue] (0,0) circle (0.08cm);
		\node[anchor=west] at (0.05,0) { $b=1$};
		
		\draw[fill=black!30!green, orange] (2,0) circle (0.08cm);
		\node[anchor=west] at (2.05,0) {$b = 2$};
		
		\draw[fill=green, green] (4,0) circle (0.08cm);
		\node[anchor=west] at (4.05,0) {$b = 3$};
	\end{tikzpicture}
	
	\includegraphics[width=0.6\columnwidth]{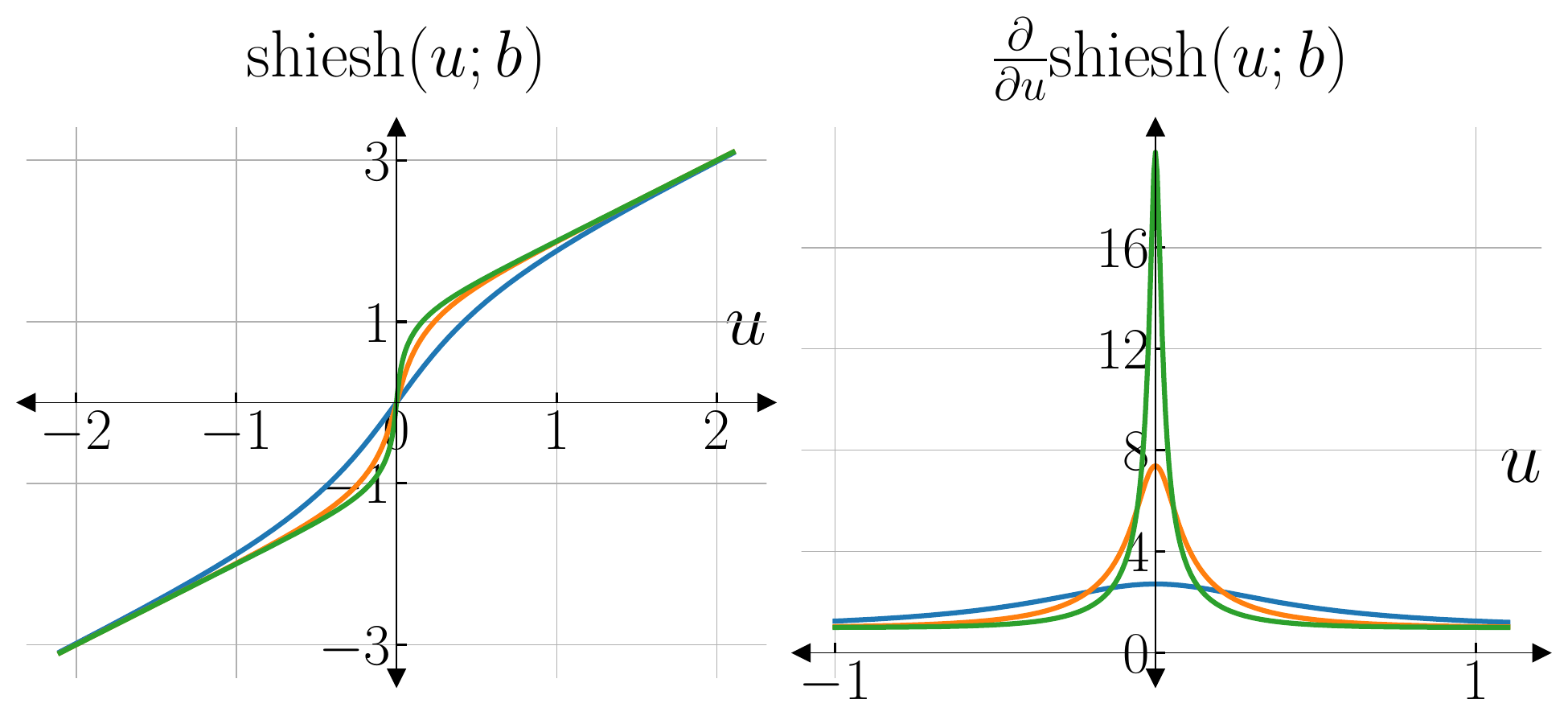}
	\caption{Demonstration of $\fa$ activation function with varying $b$.}
	\label{fig:supp_act}
\end{figure}

\subsection{Implementation details}
Implementing \fa{} on the entire $\R$ will have numerical overflow. Hence, we implement it in piece-wise manner. In this work, we are interested in $b > 0$ and show all the derivations for it.

With $\sinh(x) = \frac{e^x - e^{-x}}{2}$ and $\sinh^{-1}(x) = \log(x + \sqrt{1+x^2})$
$\fa$ can be rewritten as follows:
\begin{align*}
	\fa(u; b) &:= \frac{1}{b}\sinh^{-1}\big(\exp(b) \cdot \sinh(b\cdot u)\big) \\
			& = \frac{1}{b} \log\Big( \exp(b) \cdot \sinh(b\cdot u) \\ & \quad + \sqrt{1 + \big(\exp(b) \cdot \sinh(b\cdot u) \big)^2} \Big) \\
			& =  \frac{1}{b}\log\Bigg( \Big(\exp(b) \cdot \frac{\exp(b\cdot u)-\exp(-b\cdot u)}{2} \Big) \\ 
			& + \sqrt{1 + \Big(\exp(b) \cdot \frac{\exp(b\cdot u)-\exp(-b\cdot u)}{2} \Big)^2} \Bigg)\\
\end{align*}
When $u \gg 0$, $\fa$ can be approximated to the following:
\begin{align*}
&	\fa(u; b) \approx \frac{1}{b}\log\Bigg( \exp(b) \cdot \Big(\frac{\exp(b\cdot u)}{2} \Big) +
	\\ 
 & \qquad \sqrt{1 + \Big(\exp(b) \cdot \frac{\exp(b\cdot u)}{2} \Big)^2} \Bigg), \; \exp(-b\cdot u) \to 0 \\
&	 \approx \frac{1}{b}\log\Bigg( \exp(b) \cdot \frac{\exp(b\cdot u)}{2} + \exp(b) \cdot \frac{\exp(b\cdot u)}{2} \Bigg), \\
& \hfill \qquad  \sqrt{1 + u^2} \approx u \quad \text{for} \quad u \gg 0  \\
&	 = \frac{1}{b}\log\Bigg( \exp(b) \cdot \exp(b\cdot u) \Bigg) \\
&	 = \frac{1}{b}\log\Bigg( \exp(b) \Bigg) + \frac{1}{b}\log\Bigg(\exp(b\cdot u) \Bigg) \\
&	 = 1 + u
\end{align*}

Now for $u \ll 0$, we know that $\sinh^{-1}(u)$ and $\sinh(u)$ are odd functions meaning
\begin{align}
\sinh^{-1}(-u) &= -\sinh^{-1}(u) \\
\sinh{(-u)} &= -\sinh(u)
\end{align}
Also, we know that composition of two odd functions is an odd function making $\fa{}$ an odd function.
Now, 
\begin{align*}
	& \fa(u; b) \approx  u + 1 &\qquad \text{for} \qquad u >> 0\\
	\implies	& \fa(u; b) \approx -(-u + 1) &\qquad \text{for} \qquad u << 0 \\
\end{align*}
Hence, to avoid numerical overflow in implementing $\fa{}$, we apply it in piece-wise manner as follows:
\begin{align*}
	\fa(u; b) = \begin{cases*}
		\frac{1}{b}\sinh^{-1}(\exp(b)\sinh(b\cdot u)) \quad  \text{if} \quad |x| \leq 5 \\
		u + 1\cdot \sign(u)	 \qquad \qquad \quad \text{else}
	\end{cases*}
\end{align*}
Similarly, its partial derivative is implemented using:
\begin{align}
	\frac{\partial}{\partial u}\fa(u; b) = \begin{cases*}
		\frac{e^b \cosh(b\cdot u)}{\sqrt{1 + \big(e^b \sinh(b\cdot u)\big)^2}}
		 \quad  \text{if} \quad |x| \leq 5 \\
		1	 \qquad \qquad \qquad \qquad \text{else}
	\end{cases*} \label{eq:fa_imp}
\end{align}

\subsection{Bounds of the derivatives}
\label{sec:bounds}
Assume $\mathbf{D}\fa(u;b) = \frac{\partial }{\partial u} \fa(u;b)$ and $b > 0$.
For larger values of $u$, from eq.~\ref{eq:fa_imp}, $\mathbf{D}\fa(u;b) \approx 1$.
Now, we show the maximum of $\mathbf{D}\fa(u;b)$ for the values $u \in [-5,5]$,
For this we compute $\mathbf{D}^2\fa(u;b)$:
\begin{align*}
	\mathbf{D}^2\fa(u;b) & := \\
	&\frac{be^b \sinh(bu) \left(e^{2b} \sinh^2(bu) - e^{2b}\cosh^2(bu) + 1\right)}{\left(e^{2b} \sinh^2(bu) + 1\right)^{3/2}} \\
				& := \frac{be^b\sinh(bu)(1 - e^{2b})}{\left(e^{2b} \sinh^2(bu) + 1\right)^{3/2}}
\end{align*}
In order to compute the maximum of the function $\mathbf{D}\fa(u;b)$, we equate $\mathbf{D}^2\fa(u;b)$ to zero:
\begin{align*}
	& be^b\sinh(bu)(1 - e^{2b}) = 0  \quad \left({\left(e^{2b} \sinh^2(bu) + 1\right)^{3/2}} > 0\right) \\
	&\implies  \sinh(bu) = 0 \\
	&\implies	 u = 0
\end{align*}
Now, we compute $\mathbf{D}^3\fa(u;b)$ for $u = 0$. $\mathbf{D}^3\fa(u;b)$ can be given as:
\begin{align*}
	\mathbf{D}^3\fa(u;b) = &-\frac{b^2 e^b (2 e^{2b} \sinh^2(bu) - 1) \cosh(bx)}{(e^{2b} \sinh^2(bu) + 1)^{5/2}}\\
 &\cdot (e^{2 b} \sinh^2(bu) - e^{2 b} \cosh^2(bu) + 1)
\end{align*}
Substituting $u = 0$, we get
\begin{align*}
	\mathbf{D}^3\fa(0;b) &= -\frac{b^2 e^b (2 e^{2b} \cdot 0 - 1) \cdot1\cdot (e^{2 b} \cdot0 - e^{2 b} \cdot1 + 1)}{(e^{2b} \cdot0 + 1)^{5/2}} \\
		&	= b^2e^b(1-e^{2b}) \; 
			< 0 \qquad (b > 0)
\end{align*}

Hence, the bounds for the $\mathbf{D}\fa(u;b)$ is $\{1, e^b\}$.

\section{Additional experiments}
\label{sec:add_exp}

\commentout{
\subsection{Experiments for CRPS scores}

CRPS is widely used evaluation metric
for marginal probabilistic time series forecasting.
Hence it is interesting to see how the models perform for CRPS.
We present the results in Table~\ref{tab:crps}.
Since it is an evaluation metric for marginals, we train ProFITi
without SITA (ProFITi\_marg) so that it is comparable to other models.
Note that, all the models are optimized for log-likelihoods
hence the results may not indicate the true power of the models.

While all the baseline models can compute CRPS explicitly from Gaussian parameters,
ProFITi requires sampling the values. We randomly sampled $100$ sequences from
base distribution and transformed to predictions.
It can be seen that, our model outperforms
all the baseline models.

\begin{table*}[t]
	\centering
	\small
	\caption{Comparing models w.r.t. CRPS score on marginals.
		Lower the better. ProFITi\_marg is ProFITi trained for marginals. 
		Best results in bold and second best in italics}
	\label{tab:crps}
	\begin{tabular}{l|cccc}
		\toprule
		& \multicolumn{1}{c}{USHCN} & \multicolumn{1}{c}{Physionet'12}       & \multicolumn{1}{c}{MIMIC-III} & \multicolumn{1}{c}{MIMIC-IV} \\
		\midrule
		HETVAE			&	0.229$\pm$0.017	&	0.278$\pm$0.001	&	0.359$\pm$0.009	&	OOM	\\
		GRU-ODE			&	0.313$\pm$0.012	&	0.278$\pm$0.001	&	0.308$\pm$0.005	&	0.281$\pm$0.004\\
		Neural-flows	&	0.306$\pm$0.028	&	0.277$\pm$0.003	&	0.308$\pm$0.004	&	0.281$\pm$0.004	\\
		CRU				&	0.247$\pm$0.010	&	0.363$\pm$0.002	&	0.410$\pm$0.005	&	OOM \\
		GraFITi+		&	\textit{0.222$\pm$0.011}	&	\textit{0.256$\pm$0.001}	&	\textit{0.279$\pm$0.006}	&	\textit{0.217$\pm$.005} \\
		\midrule
		ProFITi\_marg (ours)	&	\textbf{0.192$\pm$0.019}	&	\textbf{0.253$\pm$0.001}	&	\textbf{0.276$\pm$0.001}	&	\textbf{0.206$\pm$0.001} \\
		\bottomrule
	\end{tabular}
\end{table*}

\subsection{Experiments for point forecasts}

It is interesting to see how the models compare with respect to
mean squared error (MSE), a widely used evaluation metric
for point forecasting in time series.
For this, we randomly sampled $100$ samples by transforming
the base distribution. For evaluation, we computed robustious mean of samples 
which is the mean of the sampled by removing outliers.
Now, the mean squared error is computed between robustious mean and the ground truth.
Resuls are presented in Table~\ref{tab:mse}.

Results for Mean Squared Error (MSE) are 
in conflict with those of Continuous Ranked Probability Score (CRPS) and likelihoods.
Models for uncertainty quantification often 
suffer from slightly worse point forecasts, as shown in several
studies~\cite{Lakshminarayanan2017.Simple,Seitzer2021.Pitfalls,Shukla2022.Heteroscedastic}.
However, GraFiTi remains the state-of-the-art model for point forecasting.
While we leverage GraFiTi as the encoder for ProFITi,
ProFITi incorporates various components specifically
designed to predict distributions, even if this sacrifices some point forecast accuracy.
In the domain of probabilistic forecasting models
for IMTS, the primary metric of interest is
negative log-likelihood. Here, ProFITi demonstrates superior performance.

\begin{table*}
	\centering
	\small
	\caption{Comparing models w.r.t. MSE. Lower the better. Best results in bold and second best in italics. ProFITi\_marg is ProFITi trained for marginals.}
	\begin{tabular}{l|cccc}
		\toprule
		& \multicolumn{1}{c}{USHCN} & \multicolumn{1}{c}{Physionet'12}       & \multicolumn{1}{c}{MIMIC-III} & \multicolumn{1}{c}{MIMIC-IV} \\
		\midrule
		HETVAE			&	0.298$\pm$0.073	&	0.304$\pm$0.001	&	0.523$\pm$0.055	&	OOM	\\
		GRU-ODE			&	0.410$\pm$0.106	&	0.329$\pm$0.004	&	0.479$\pm$0.044	&	0.365$\pm$0.012	\\
		Neural-Flows	&	0.424$\pm$0.110	&	0.331$\pm$0.006	&	0.479$\pm$0.045	&	0.374$\pm$0.017	\\
		CRU				&	\textit{0.290$\pm$0.060}	&	0.475$\pm$0.015	&	0.725$\pm$0.037	&	OOM	\\
		GraFITi			&	\textbf{0.256$\pm$0.027}	&	\textbf{0.286$\pm$0.001}	&	\textbf{0.401$\pm$0.028}	&	\textbf{0.233$\pm$0.005}	\\
		\midrule
		ProFITi\_marg (ours)	&	0.300$\pm$0.053	&	\textit{0.295$\pm$0.002}	&	\textit{0.443$\pm$0.028}	&	\textit{0.246$\pm$0.004}	\\
	\end{tabular}
\end{table*}
}

\subsection{Experiment on varying the order of the channels}

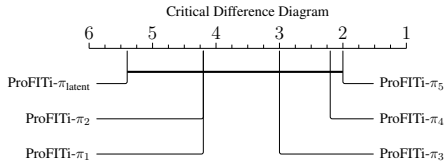
\begin{figure}[h]
	\centering
	\large
		\begin{tikzpicture}[
  treatment line/.style={rounded corners=1.5pt, line cap=round, shorten >=1pt},
  treatment label/.style={font=\normalfont},
  group line/.style={ultra thick}, scale = 0.5
]

\begin{axis}[
  clip={false},
  axis x line={center},
  axis y line={none},
  axis line style={-},
  xmin={1},
  ymax={0},
  scale only axis={true},
  width={\axisdefaultwidth},
  ticklabel style={anchor=south, yshift=1.3*\pgfkeysvalueof{/pgfplots/major tick length}, font=\LARGE},
  every tick/.style={draw=black},
  major tick style={yshift=.5*\pgfkeysvalueof{/pgfplots/major tick length}},
  minor tick style={yshift=.5*\pgfkeysvalueof{/pgfplots/minor tick length}},
  title style={yshift=\baselineskip},
  xmax={6},
  ymin={-4.5},
  height={5\baselineskip},
  xtick={1,2,3,4,5,6},
  minor x tick num={3},
  x dir={reverse},
  title={Critical Difference Diagram},
]

\draw[treatment line] ([yshift=-2pt] axis cs:2.0, 0) |- (axis cs:1.5, -2.0)
  node[treatment label, anchor=west] {ProFITi-$\pi_5$};
\draw[treatment line] ([yshift=-2pt] axis cs:2.2, 0) |- (axis cs:1.5, -4.0)
  node[treatment label, anchor=west] {ProFITi-$\pi_4$};
\draw[treatment line] ([yshift=-2pt] axis cs:3.0, 0) |- (axis cs:1.5, -6.0)
  node[treatment label, anchor=west] {ProFITi-$\pi_3$};
\draw[treatment line] ([yshift=-2pt] axis cs:4.2, 0) |- (axis cs:5.9, -6.0)
  node[treatment label, anchor=east] {ProFITi-$\pi_1$};
\draw[treatment line] ([yshift=-2pt] axis cs:4.2, 0) |- (axis cs:5.9, -4.0)
  node[treatment label, anchor=east] {ProFITi-$\pi_2$};
\draw[treatment line] ([yshift=-2pt] axis cs:5.4, 0) |- (axis cs:5.9, -2.0)
  node[treatment label, anchor=east] {ProFITi-$\pi_\text{latent}$};
\draw[group line] (axis cs:2.0, -1.3333333333333333) -- (axis cs:5.4, -1.3333333333333333);

\end{axis}
\end{tikzpicture}
	\captionof{figure}{Statistical test on the results of various channel orders for ProFITi.}
	\label{fig:cd_perm}
\end{figure}
In ProFITi, we fix the order of channels to make $\fc{}$ equivariant.
In Figure~\ref{fig:cd_perm}, through critical difference diagram, we demonstrate that changing the permutation 
used to fix the channel order does not provide statistically significant difference in the results.
ProFITi$-\pi_{1:5}$ indicate ProFITi with $5$ different pre-fixed permutations on channels
while time points are left in causal order.
The order in which we sort channels and time points is a hyperparameter.
To avoid this hyperparamerter and even allow different sorting criteria
for different instances, one can parametrize $P_\pi$ as a function
of $X$ (\textbf{learned sorted triangular invertible attention}).
ProFITi$-\pi_\text{latent}$ indicate ProFITi where the permutation of all the observations
(including channels and time points) are set on the latent embedding.
Specifically, we pass $x$ through an MLP and selected the permutation by sorting its output.
Significant difference in results is not observed because
the ordering in lower triangular matrix can be seen as a Bayesian network,
and the graph with the triangular matrix as adjacency is a full directed graph,
and all of them induce the same factorization.

\subsection{Varying observation and forecast horizons}
\label{sec:scalability}

In Table~\ref{tab:scale}, we compare ProFITi with 
two next best models, GraFITi+ and Neural Flows.
Our evaluation involves varying the observation and forecast
horizons on the Physionet'12 dataset. Furthermore, we also
compare with ProFITi-$A^\text{tri}$+$A^\text{reg}$, 
wherein the triangular attention mechanism in ProFITi
is replaced with a regularized attention mechanism.

ProFITi exhibits superior performance compared to both Neural Flows and GraFITi+,
demonstrating a significant advantage. 
We notice that when we substitute $A^\text{tri}$ with $A^\text{reg}$; 
this change leads to a degradation in performance as the forecast sequence length increases.
Also, note that the run time for computing $A^\text{reg}$ and its determinant is an order of magnitude larger
than that of $A^\text{tri}$. 
This is because, it requires $\bigO(K^3)$
complexity to compute spectral radius $\sigma(A)$ and determinant of $A^\text{reg}$,
whereas computing determinant of $A^\text{tri}$ requires $\bigO(K)$ complexity.

Additionally, we see that as the sequence length increases,
there is a corresponding increase in the variance of the njNLL.
This phenomenon can be attributed to the escalating number of target values ($K$),
which increases with longer sequences.
Predicting the joint distribution over a larger set of target values
can introduce noise into the results, thereby amplifying the variance in the outcomes.
Whereas for the GraFIT+ and Neural Flows it is not the case as they predict only marginal distributions.
Further, as expected the njNLL of all the models decrease with increase in sequence lengths as
it is difficult to learn longer horizons compared to short horizons of the forecast.

In Figure~\ref{fig:scalability}, we show the qualitative performance of ProFITi. 
We compare the trajectories predicted by ProFITi by random sampling of $z$ with the
distribution predicted by the GraFITi+ (next best model).
\begin{table*}
	\centering
	\fontsize{8}{10}\selectfont
	\caption{Varying observation and forecast horizons of Physionet'12 dataset}
	\label{tab:scale}
	\begin{tabular}{lcrrcrrcrr}
		\toprule
					& \multicolumn{3}{c}{obs/forc : 36/12hrs}	& \multicolumn{3}{c}{obs/forc : 24/24hrs}	& \multicolumn{3}{c}{obs/forc : 12/36hrs}	\\
					&	njNLL	& \multicolumn{2}{c}{run time (s)} 		&	njNLL	& \multicolumn{2}{c}{run time (s)} 		&	njNLL	& \multicolumn{2}{c}{run time (s)} 			\\
					&	& epoch &	$A$ &	& epoch &	$A$ &	& epoch &	$A$ 	\\
					\hline
		Neural Flows	&	0.709$\pm$0.483	& 109.6	&	-	&	1.097$\pm$0.044	&	46.6	&	-	&	1.436$\pm$0.187	&	45.5	&	-	\\
		GraFITi+	&	0.522$\pm$0.015	&	42.9&-	&	0.594$\pm$0.009	&	43.1&	-	&	0.723$\pm$0.004	&	37.5&	-	\\
		ProFITi		&	-0.768$\pm$0.041 &	64.8	&	3.3& -0.355$\pm$0.243	&	66.2	&	5.2	&	-0.291$\pm$0.415	&	82.1&	8.6	\\
		ProFITi-$A^\text{tri}$+$A^\text{reg}$	&	-0.196$\pm$0.096	&	89.9&	7.1 	&	0.085$\pm$0.209	&	142.1	&	30.1	&	0.092$\pm$0.168	&	245.8&	73.1	\\
		\bottomrule
	\end{tabular}
\end{table*}

\begin{figure*}
	\centering
	
	\begin{tikzpicture}
		\draw[red, thick] (0,0) -- (1,0);
		\node at (2.2,0) {Ground Truth};
		
		\draw[blue!50!, thick] (4,0) -- (5,0);
		\node at (6,0) {ProFITi};
		
		\draw[green, thick] (7,0) -- (8,0);
		\node at (9,0) {GraFITi+};
	\end{tikzpicture}

	\captionsetup[subfigure]{justification=centering}
	\begin{subfigure}{0.32\textwidth}
		\includegraphics[width=\textwidth]{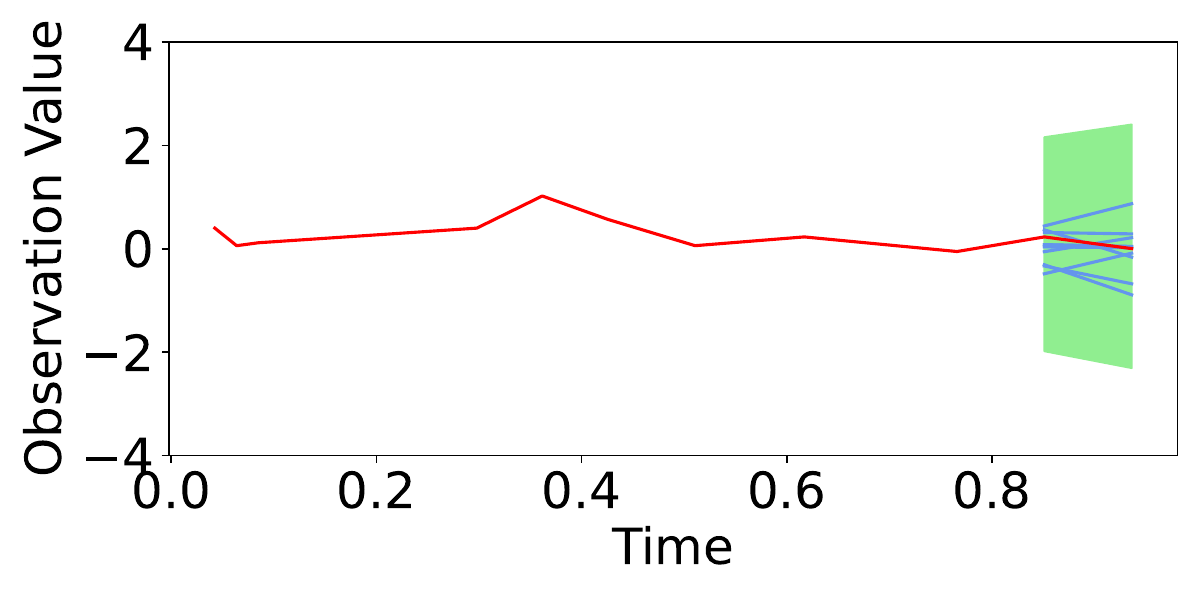}
	\end{subfigure}
	\begin{subfigure}{0.32\textwidth}
		\includegraphics[width=\textwidth]{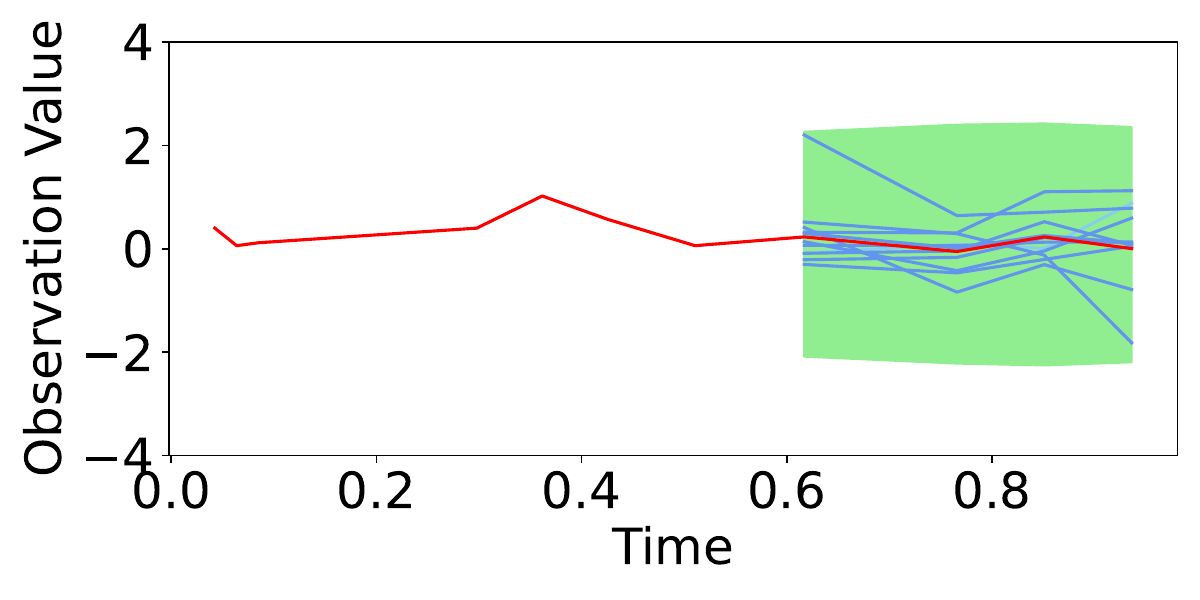}
	\end{subfigure}
	\begin{subfigure}{0.32\textwidth}
		\includegraphics[width=\textwidth]{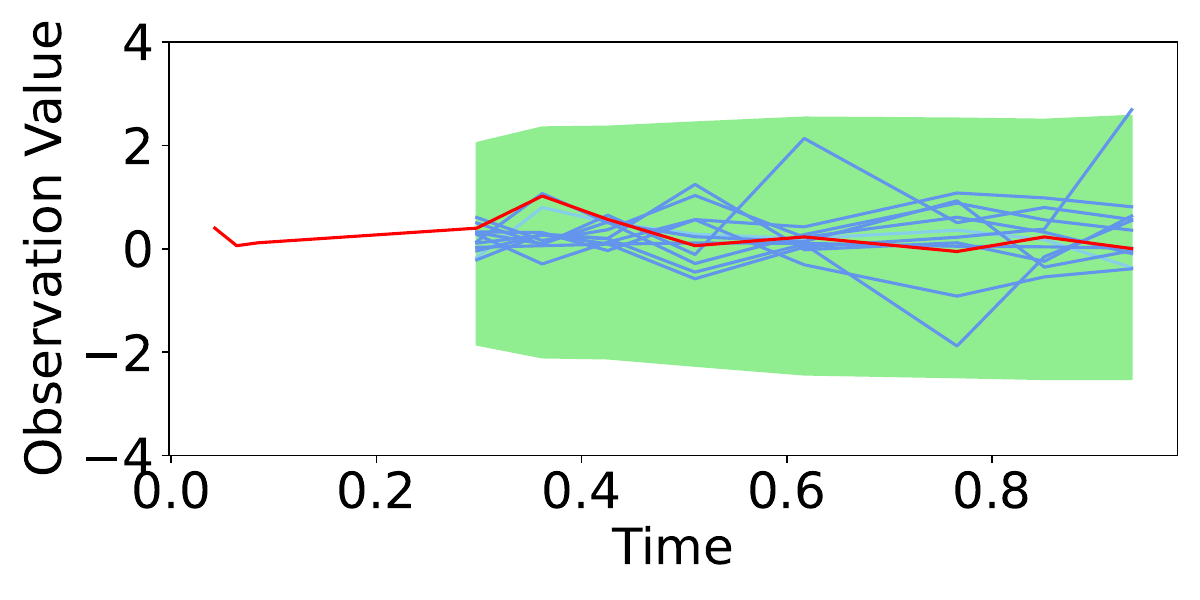}
	\end{subfigure}

	\begin{subfigure}{0.32\textwidth}
		\includegraphics[width=\textwidth]{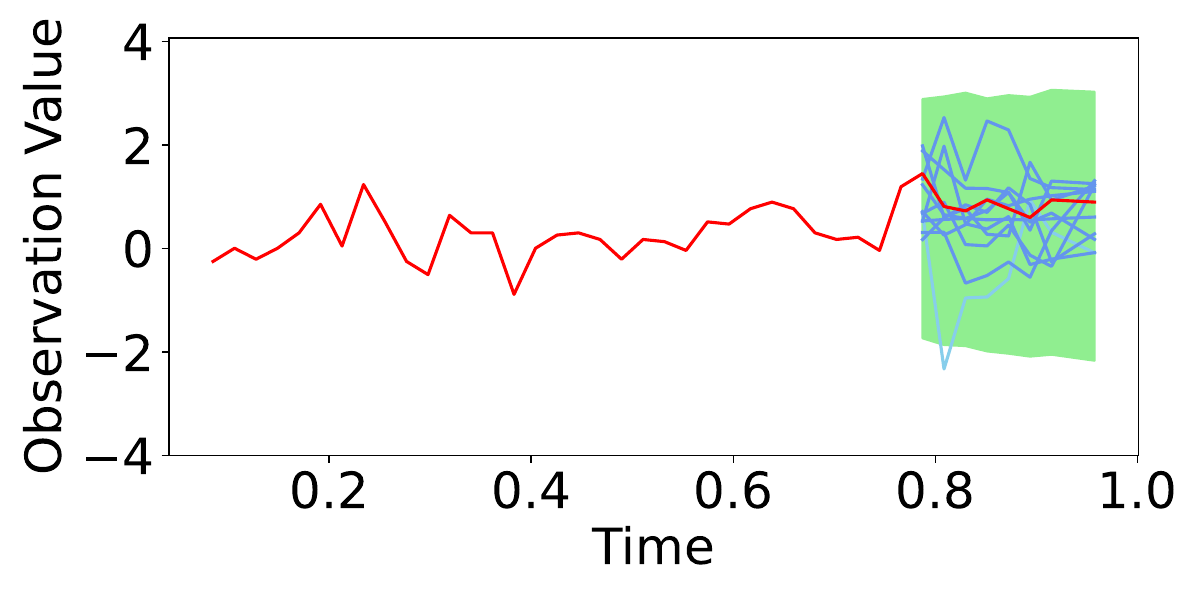}
		\subcaption{Obs/forc: 36/12hrs}
	\end{subfigure}
	\begin{subfigure}{0.32\textwidth}
		\includegraphics[width=\textwidth]{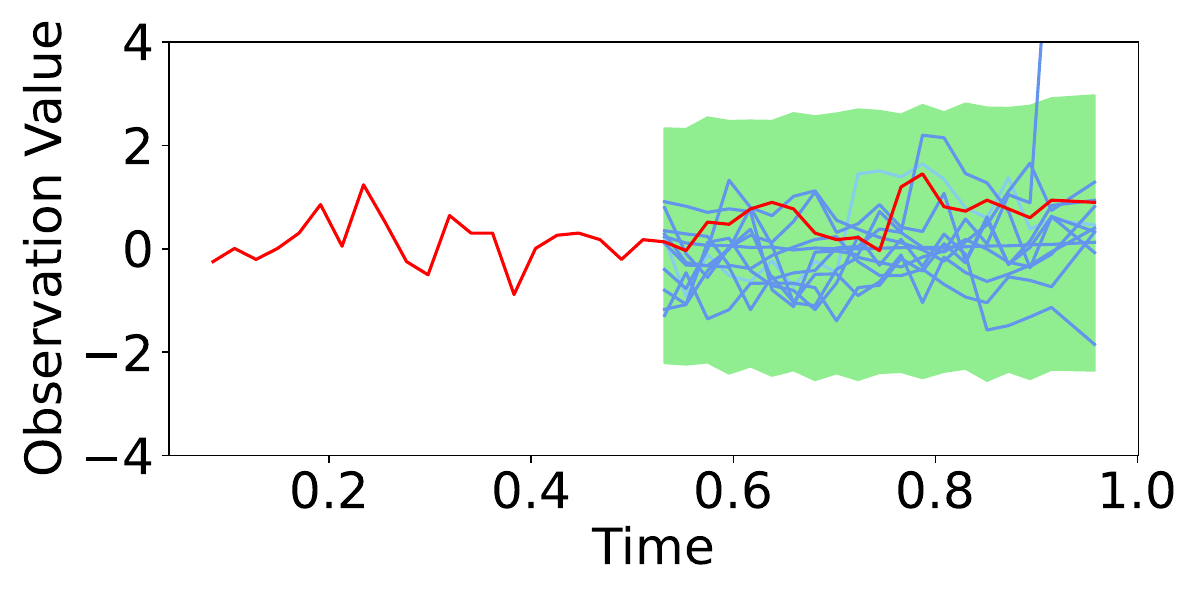}
		\subcaption{Obs/forc: 24/24hrs}
	\end{subfigure}
	\begin{subfigure}{0.32\textwidth}
		\includegraphics[width=\textwidth]{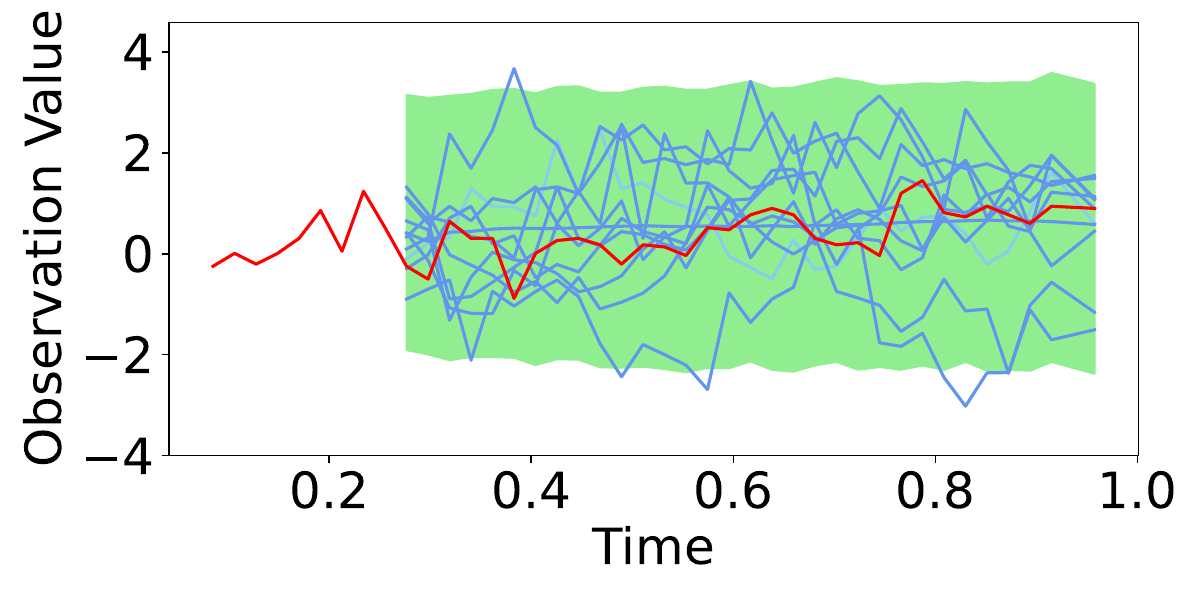}
		\subcaption{Obs/forc: 12/36hrs}
	\end{subfigure}
	\caption{Demonstrating (10) trajectories generated using ProFITi for Physionet'12 dataset.}
	\label{fig:scalability}
\end{figure*}

\subsection{Experiment with varying number of missing values}

\begin{table}[h]
	\centering
	\small
	\caption{Varying $\#$observations in the time series. Physionet'12 dataset, evaluation metric njNLL.}
	\begin{tabular}{lccc}
		\hline
		&\multicolumn{3}{c}{$\%$ missing observations} \\
		\hline
		& $10\%$	&	$50\%$	&	$90\%$	\\
		\hline
	Neural Flow	&	0.497$\pm$0.042	&	0.542$\pm$0.031&	0.677$\pm$0.018	\\
	GraFITi+	&	0.402$\pm$0.016 &	0.481$\pm$0.018	&	0.666$\pm$0.012	\\
	ProFITi		&	\textbf{-0.141$\pm$0.036}	&	\textbf{0.077$\pm$0.012	}&	\textbf{0.336$\pm$0.033}\\		
	\hline
	\end{tabular}
\end{table}

Here, we experimented on Physionet'12 dataset with varying sparsity levels. We randomly removed $x\%$, $x \in \{10, 50, 90\}$ of observations in the series. Compared the performance with GraFITi+ and Neural Flow. We observe that even with $90\%$ missing values, ProFITi perform significantly better.

\subsection{Experiment with varying time sparsity}
\begin{table}[H]
	\centering
	\small
	\caption{Varying $\#$observation events i.e., time points in the time series. Physionet'12 dataset, evaluation metric njNLL.}
	\begin{tabular}{lccc}
		\hline
		&\multicolumn{3}{c}{$\%$ missing observation events} \\
		\hline
		& $10\%$	&	$50\%$	&	$90\%$	\\
		\hline
		\hline
		Neural Flow	&	0.528$\pm$0.037	&		0.578$\pm$0.048&	0.858$\pm$0.006	\\
		GraFITi+	&	0.469$\pm$0.032&	0.520$\pm$0.022	&	0.767$\pm$0.004\\
		ProFITi		&	\textbf{-0.106$\pm$0.112}&	\textbf{-0.160$\pm$0.056}	&	\textbf{0.128$\pm$0.056}\\		
		\hline
	\end{tabular}
\end{table}

\begin{table*}
	\small
	\centering
	\caption{Comparing mNLL of ProFITi and ProFITi\_marg to verify marginal consistency.}
	\label{tab:marg_consist}
	\begin{tabular}{lcccc}
		\toprule
		&	\multicolumn{1}{c}{USHCN}	&	\multicolumn{1}{c}{Physionet,12}	&	\multicolumn{1}{c}{MIMIC-III}	&	\multicolumn{1}{c}{MIMIC-IV}\\
		\midrule
		ProFITi (ours) & 	-3.319$\pm$0.229	&	-0.095$\pm$0.057		&	0.477$\pm$0.023	&	-0.189$\pm$0.024 \\
		ProFITi\_marg (ours)	&	-2.575$\pm$1.336	&	-0.368$\pm$0.033	&	0.092$\pm$0.036	&	-0.782$\pm$0.023	\\
		\bottomrule
	\end{tabular}
\end{table*}

We use Physionet'12 dataset to experiment on varying number of observation events i.e. time points. We randomly removed $x\%, x \in \{10, 50, 90\}$ of observation events in the series and compared GraFITi+, Neural Flow and ProFITi. Again, we observe that even with $90\%$ of time points missing, ProFITi perform significantly better.

\section{Marginal consistency}
\label{sec:marg_consistency}

We notice that ProFITi has a limitation in terms of Marginal consistency.
A model should be \textbf{consistent w.r.t. margins},
i.e., yield the marginal distribution for any subquery
$\{k_1,\ldots,k_S\}\subset \{1,\ldots,K\}$:
	\begin{align*}
		& \hat p(y_{k_1},\ldots,y_{k_S} \mid x\obs,x\qu_{k_1},\ldots,x\qu_{k_S})
		\\ & = \int\limits_{y_{\bar k_1},\ldots,y_{\bar k_{K-S}}}
		\hat p(y_1,\ldots,y_K \mid x\obs,x\qu_1,\ldots,x\qu_K)
		dy_{\bar k_1}\cdots dy_{\bar k_{K-S}}, 
		\\ & \quad \text{with }
		\{\bar k_1,\ldots,\bar k_{K-S}\}:=
		\{1,\ldots,K\} \setminus \{ k_1,\ldots,k_S\}
	\end{align*}
ProFITi is not guaranteed to have this property, hence, it provide 
inconsistent results on marginals. To demonstrate this,
we trained the models ProFITi and ProFITi\_marg
for njNLL and compute the mNLL. 
Results are presented in Table~\ref{tab:marg_consist}.
For ProFITi at the time of inference
we zeroed the off diagonal elements of $A^\text{tri}$ in SITA to get
mNLL.

Other than USHCN, ProFITi\_marg yield results that are vastly better
than ProFITi. This is because of marginal inconsistency of the ProFITi model.
On the other hand, for USHCN dataset, ProFITi has better mNLL than ProFITi\_marg.
However, ProFITi\_marg has as large standard deviation because one fold had poor
mNLL whereas other folds have mNLL comparable to that of ProFITi.

\section{Hyperparameters searched}
Following the original works of the baseline models, we search the following hyperparameters:

 \paragraph{HETVAE~\citep{Shukla2022.Heteroscedastic}:}
 \begin{itemize}
 	\item Latent Dimension: \{8, 16, 32, 64, 128\}
 	\item Width	:	\{128,256,512\}
 	\item \# Reference Points:	\{4, 8, 16, 32\}
 	\item \# Encoder Heads:	\{1, 2, 4\}
 	\item MSE Weight:	\{1, 5, 10\}
 	\item Time Embed. Size:	\{16, 32, 64, 128\}
 	\item Reconstruction Hidden Size: \{16, 32, 64, 128\}
 \end{itemize}

\paragraph{GRU-ODE-Bayes~\citep{DeBrouwer2019.GRUODEBayes}:}
\begin{itemize}
	\item solver:	\{euler, dopri5\}
	\item \# Hidden Layers:	\{3\}
	\item Hidden Dim.:	\{64\}
\end{itemize}

\paragraph{Neural Flows~\citep{Bilos2021.Neural}:}
\begin{itemize}
	\item Flow Layers:	\{1, 4\}
	\item \# Hidden Layers:	\{2\}
	\item Hidden Dim.:	\{64\}
\end{itemize}

\paragraph{CRU~\citep{Schirmer2022.Modeling}:}
\begin{itemize}
	\item \# Basis:	\{10, 20\}
	\item Bandwidth:	\{3, 10\}
	\item lsd:	\{10, 20, 30\}
\end{itemize}

\paragraph{CNF+:}
\begin{itemize}
	\item \# Attention layers: \{1,2,3,4\}
	\item \# Projection matrix dimension for attention: \{32,64,128,256\}
\end{itemize}

\paragraph{GraFITi+~\citep{Yalavarthi2024.GraFITi}:}
\begin{itemize}
	\item \# layers:	\{2, 3, 4\}
	\item \# MAB heads:	\{1, 2, 4\}
	\item Latent Dim.:	\{32, 64, 128\}
\end{itemize}

\paragraph{ProFITi (Ours):}
\begin{itemize}
	\item \# Flow layers:	\{7, 8, 9, 10\}
	\item $\epsilon$:	\{1e-5\}
	\item Latent Dim.:	\{32, 64, 128, 256\}
\end{itemize}
\end{document}